\documentclass{article}

\usepackage[preprint]{neurips_2026}

\usepackage[utf8]{inputenc}
\usepackage[T1]{fontenc}
\usepackage{hyperref}
\usepackage{url}
\usepackage{booktabs}
\usepackage{amsfonts}
\usepackage{amssymb}
\usepackage{bm}
\usepackage{nicefrac}
\usepackage{microtype}
\usepackage{xcolor}
\usepackage{algorithm}
\usepackage{algpseudocode}
\usepackage{enumitem}

\usepackage{conference}
\usepackage{mathabx}

\title{EVE-Agent: Evidence-Verifiable Self-Evolving Agents}

\author{Yamato Arai 
\\
Fujitsu Limited \\  
Department of Basic Science\\
The University of Tokyo\\
\And 
Yuma Ichikawa 
\\
Fujitsu Limited \\
RIKEN center for AIP
}

\begin{document}

\maketitle

\begin{abstract}
    Self-evolving agents should not train on examples they cannot justify. Data-free self-evolving search agents offer a scalable route to systems that generate their own questions, answer them, and improve from their own feedback without human annotations. Yet, without verifiable evidence, this loop can reward fluent but unsupported examples, turning the self-generated curriculum into an opaque and potentially unreliable training signal. We argue that evidence verifiability is a prerequisite for trustworthy self-evolution in search agents: each generated instance should include not only an answer but also a source-grounded span whose contribution to that answer can be measured. We introduce EVE-Agent, an Evidence-Verifiable Self-Evolving Agent that operationalizes this principle through a modification to the proposer--solver framework. The proposer generates a question, an answer, and a verbatim evidence span. An evidence verifier then rewards the span according to the marginal accuracy gain when the evidence is provided. This produces a training signal that favors evidence that genuinely helps answer the question, without requiring oracle answers, human labels, or external annotations. EVE-Agent leaves the backbone model, retriever, search tool, and optimization framework unchanged. Experiments show that EVE-Agent substantially improves evidence-grounded correctness over prior self-evolving search agents. The resulting curriculum is not merely self-generated but auditable by construction: each training example carries an inspectable source span that explains why it should be trusted.
\end{abstract}

\section{Introduction}\label{sec:intro}

Search agents for knowledge-intensive question answering must do more than retrieve relevant information: they must also ground their answers in appropriate evidence. This requirement distinguishes them from standard language models, which may generate fluent responses without revealing why those responses should be trusted. In supervised settings, evidence grounding is commonly enforced using human-curated question--answer datasets annotated with supporting spans, such as HotpotQA, 2WikiMultiHopQA, and MuSiQue~\citep{yang2018hotpotqa,ho20202wikimultihopqa,trivedi2022musique}. Retrieval-augmented and tool-using language models operationalize this evidence-seeking behavior~\citep{lewis2020retrieval,yao2023react,schick2023toolformer,trivedi2023interleaving,asai2023selfrag}, while recent reinforcement-learning methods further demonstrate that models can learn to invoke search as part of their reasoning process~\citep{jin2025search,song2025r1searcher}.

However, constructing evidence-grounded supervision at scale is costly, tightly coupled to a particular corpus, and difficult to update when the retrieval environment changes. Data-free self-evolution provides an attractive alternative: a model generates its own training questions, attempts to solve them, and improves from the resulting feedback. This paradigm has shown strong promise in domains such as reasoning and code, where self-generated tasks can be validated by external oracles, including interpreters and symbolic checkers~\citep{zhao2025absolute,huang2026rzero}. It has also recently been extended to multi-turn search agents~\citep{yue2026drzero}. However, search-based question answering lacks the exact verification mechanisms available in code or mathematics. A self-generated question may be ambiguous, unsupported by the source text, or answerable from the model's memorized knowledge alone. Likewise, a solver may produce a confident answer without supplying evidence that genuinely supports it.

\begin{figure}[tb]
    \centering
    \includegraphics[width=0.9\linewidth]{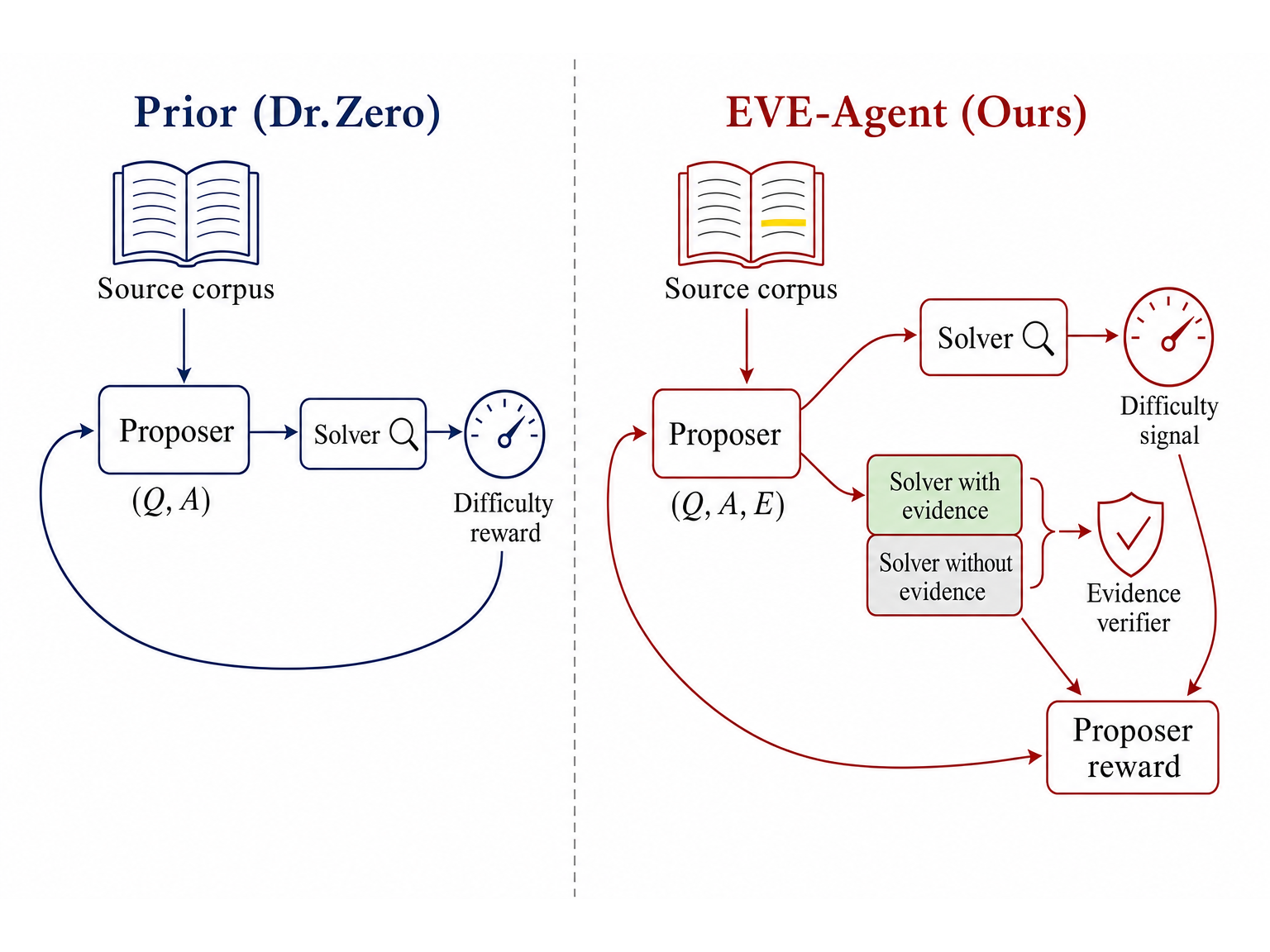}
    \caption{\textbf{Evidence-verifiable self-evolving search agents.} Existing self-evolving search agents (\emph{left}) reward proposers using only a difficulty signal based on solver accuracy, without auditing the source evidence behind each question. EVE-Agent (\emph{right}) requires the proposer to output a source-grounded evidence span and rewards it only when that evidence causally improves the solver's answer accuracy, measured by the gain from no-evidence to with-evidence rollouts. This modification is limited to the reward: the proposer, solver, backbone model, and search tool remain unchanged.}
    \label{fig:loop}
\end{figure}

This limitation exposes a central weakness in existing self-evolving search-agent loops. Their reward signals primarily assess whether a generated question is useful as a difficulty-controlled training instance. However, they do not verify whether the corresponding answer is grounded in a source span that can be checked. Consequently, unsupported examples may enter the self-generated curriculum and shape subsequent learning. The problem is not simply that evidence may be missing. In practice, a system may produce a syntactically valid evidence block even when the cited span does not actually justify the answer. Such examples are difficult to audit: once they are incorporated into the curriculum, it becomes unclear whether the agent is learning to search for and reason over evidence, or merely reinforcing fluent but unverifiable behavior.

We argue that evidence verifiability should be a core design principle for data-free, self-evolving search agents. Each generated training instance should include a source-grounded evidence span, and the utility of that span should be explicitly measurable. This requirement reframes evidence from an optional explanation into a training-time object that can be inspected, scored, and reused. It also makes the generated curriculum more trustworthy: every question--answer pair is paired with a concrete textual basis, and the system can be evaluated not only on whether it answers correctly but also on whether it provides evidence that supports its answer.
To this end, we introduce EVE-Agent, a lightweight extension of the proposer--solver framework built around evidence verifiability. The proposer generates a question, an answer, and a verbatim evidence span from the source text. An evidence verifier then rewards the proposer according to the marginal improvement in the current solver's answer accuracy when the evidence is provided, relative to answering from the question alone. This signal requires no oracle answers, human labels, or external annotations: it is computed solely from the solver, the proposer-emitted evidence, and the corpus. The same evidence span is subsequently used to train the solver to produce both an answer and supporting evidence. Importantly, EVE-Agent leaves the backbone model, retriever, search tool, and policy-optimization framework unchanged.

The resulting self-generated curriculum is auditable by design. Each training example is paired with an explicit source span, and that span is rewarded only when it helps the solver answer the question. This design discourages unsupported or purely memorization-based questions while preserving the scalability advantages of data-free self-evolution. It also offers a practical mechanism for inspecting the agent's generated training data: the curriculum is no longer a collection of opaque question--answer pairs, but a set of evidence-linked instances whose grounding can be verified post hoc.
Our experiments show that, under matched conditions, EVE-Agent substantially improves evidence-grounded correctness over prior self-evolving search-agent methods. These results suggest that self-evolving search agents can be trained not only to answer questions but also to produce evidence that makes their own training process verifiable.

\section{Background}\label{sec:prelim}

\paragraph{Notation.}
Let $\mathcal{D}=\{d_1,\ldots,d_{|\mathcal{D}|}\}$ denote a finite corpus, where each document $d_i$ is represented as a token sequence. A task instance is a triple $(q,a,e)$ consisting of a question $q$, its target answer $a$, and an evidence span $e$. The evidence span is required to be a contiguous text span copied from either a source document in $\mathcal{D}$ or a snippet retrieved from that corpus. A search engine $\mathcal{R}$ is shared by all agents: given a text query, $\mathcal{R}$ returns a finite list of snippets drawn from $\mathcal{D}$. Throughout the paper, logarithms are natural, and $\mathbf{1}\{\cdot\}$ denotes the indicator function, which equals $1$ when its argument is true and $0$ otherwise. For any model $M$ and input $x$, we use $M(a \mid x) \coloneqq \mab{P}_{\hat a \sim M(\cdot \mid x)}[\hat a = a]$
to denote the probability that $M$ generates the answer string $a$ when conditioned on $x$.

\paragraph{Self-evolving search-agent loop.}
The self-evolving search-agent framework consists of two policies that are updated over training rounds $t=1,\ldots,T$. The proposer policy, denoted by $\pi_t^{\mathrm{pro}}$, generates a training instance from a source document. In the prior framework, this instance is a question--answer pair $(q,a)$; in EVE-Agent, it is extended to a question--answer--evidence triple $(q,a,e)$. The solver policy, denoted by $\pi_t^{\mathrm{sol}}$, receives a question, may call the shared search engine $\mathcal{R}$ during its reasoning process, and outputs an answer. For notational convenience, we define
\begin{equation}
    M_{\mathrm{sol},t}(\cdot \mid x)
    \coloneqq
    \pi_t^{\mathrm{sol}}(\cdot \mid x,\mathcal{R})
    \label{eq:solver-induced-distribution}
\end{equation}
for the answer distribution induced by the solver at round $t$ when it is given input $x$ and access to the search engine. We use $M_{\mathrm{pro},t}$ analogously for the distribution induced by the proposer. The setting is data-free in the sense that no human-labeled question--answer pairs or supporting spans are provided. The only human-supplied resources are the corpus $\mathcal{D}$ and the search engine $\mathcal{R}$.

\paragraph{Difficulty reward.}
We first recall the difficulty-based proposer reward used in the prior self-evolving search-agent framework~\citep{yue2026drzero}. At training round $t$, a source document $d\in\mathcal{D}$ is sampled uniformly from the corpus. The proposer then generates a question--answer pair
$(q,a)\sim \pi_t^{\mathrm{pro}}(\cdot \mid d,\mathcal{R})$,
and the solver answers the same question $n$ times independently, $\{\hat a_j\}_{j=1}^{n} \sim M_{\mathrm{sol},t}(\cdot \mid q)$. Let $k\coloneqq \sum_{j=1}^{n}\mathbf{1}\{\hat a_j=a\}$ be the number of solver predictions that exactly match the proposer-provided answer. The proposer receives the difficulty reward
\begin{equation}
    R^{\mathrm{DZ}}_t(q,a;k)
    =
    \mathbf{1}\{0<k<n\}\frac{n-k}{n-1}.
    \label{eq:dz-reward-raw}
\end{equation}
The prior system~\citep{yue2026drzero} also adds a format-related term to this signal; in our notation we keep $R^{\mathrm{DZ}}_t$ for the pure difficulty component and treat the format reward separately in Eq.~\eqref{eq:prop-reward}.

This reward favors questions that are neither trivial nor impossible for the current solver. If all solver predictions are correct or all are incorrect, then the difficulty term is zero. Otherwise, the reward increases as the number of incorrect predictions grows. Thus, among questions that the solver can answer at least sometimes, the proposer is encouraged to generate examples near the solver's current learning frontier.
This intuition can be made precise. Define $p_t^{\mathrm{sol}}(q,a) \coloneqq M_{\mathrm{sol},t}(a\mid q)$, the probability that the solver at round $t$ generates answer string $a$ when conditioned on question $q$. Since the $n$ solver predictions are sampled independently, the number of correct predictions follows $k \sim \mathrm{Bin}\bigl(n,p_t^{\mathrm{sol}}(q,a)\bigr)$. Taking the expectation of Eq.~\eqref{eq:dz-reward-raw} over this binomial randomness yields
\begin{equation}
    \mathbb{E}_{k\sim\mathrm{Bin}(n,p)}
    \bigl[
        R^{\mathrm{DZ}}_t(q,a;k)
    \bigr]
    =
    \phi_n\bigl(p_t^{\mathrm{sol}}(q,a)\bigr),
    \label{eq:dz-phi-expectation}
\end{equation}
where
\begin{equation}
    \phi_n(p)
    \coloneqq
    \frac{n}{n-1}(1-p)\bigl(1-(1-p)^{n-1}\bigr).
    \label{eq:dz-phi}
\end{equation}
Lemma~\ref{lem:phi-n} proves this identity. The function $\phi_n$ is continuous and unimodal on $[0,1]$, and it vanishes at both endpoints. Therefore, the expected difficulty reward is small both when the solver almost never answers correctly and when it almost always answers correctly; it is largest at an intermediate success probability.

\paragraph{Hop-grouped relative policy optimization.}
Optimizing the proposer directly with group-relative policy optimization would be costly in the search-agent setting. A direct implementation would require nested sampling: for each source document, one would sample multiple candidate questions from the proposer, and for each candidate question one would run multiple solver rollouts. Because each solver rollout may call the search engine several times, this nested procedure is prohibitively expensive.

Hop-grouped relative policy optimization (HRPO) avoids this cost by sampling one proposer output per source document while normalizing rewards within comparable groups~\citep{yue2026drzero}. Suppose a batch contains $N$ generated question--answer pairs, $\{(q_i,a_i)\}_{i=1}^{N}$.
Each question has a prescribed hop count $h_i\in\mathcal{H}$, where the hop count indicates the intended number of reasoning steps or evidence pieces needed to answer the question. For each hop value $h$, define $\mathcal{I}_h \coloneqq \{i: h_i=h\}$.

HRPO computes the standardized advantage of example $i$ within its hop group:
\begin{equation}
    A_{i,h}
    =
    \frac{
        R^{\mathrm{DZ}}_t(q_i,a_i;k_i)
        -
        \mathbb{E}_{j\in\mathcal{I}_h}
        \bigl[
            R^{\mathrm{DZ}}_{t,j}
        \bigr]
    }{
        \sqrt{
            \mathrm{Var}_{j\in\mathcal{I}_h}
            \bigl[
                R^{\mathrm{DZ}}_{t,j}
            \bigr]
        }
        +
        \delta_0
    },
    \label{eq:hrpo-adv}
\end{equation}
where $k_i$ is the number of correct solver predictions for example $i$, $R^{\mathrm{DZ}}_{t,j}$ denotes the difficulty reward of example $j$, and $\delta_0>0$ is a numerical stabilizer.

Let $\pi_{\mathrm{ref}}^{\mathrm{pro}}$ be a frozen reference proposer policy. In our experiments, this reference is the proposer initialization at the beginning of Phase A. The HRPO update maximizes
\begin{equation}
    \mathcal{J}^{\mathrm{HRPO}}_t
    =
    \frac{1}{N}
    \sum_{h\in\mathcal{H}}
    \sum_{i\in\mathcal{I}_h}
    \log
    \pi_t^{\mathrm{pro}}(q_i,a_i\mid d_i,\mathcal{R})
    A_{i,h}
    -
    \beta
    \mathbb{E}_{d}
    \left[
        \mathrm{KL}
        \left(
            \pi_t^{\mathrm{pro}}(\cdot\mid d,\mathcal{R})
            \,\middle\|\,
            \pi_{\mathrm{ref}}^{\mathrm{pro}}(\cdot\mid d,\mathcal{R})
        \right)
    \right],
    \label{eq:hrpo-objective}
\end{equation}
where $\beta>0$ controls the strength of KL regularization. The KL divergence is taken between two conditional distributions over proposer outputs given the same source document $d$ and access to the same search engine $\mathcal{R}$. Its role is to keep the current proposer $\pi_t^{\mathrm{pro}}$ close to the frozen reference $\pi_{\mathrm{ref}}^{\mathrm{pro}}$, thereby preventing overly large policy updates. This KL term is conceptually separate from the relative advantage normalization in Eq.~\eqref{eq:hrpo-adv}.

\paragraph{Group-relative policy optimization.}
The solver is trained with group-relative policy optimization (GRPO)~\citep{shao2024deepseekmath}. For a given question $q$, the rollout policy is the previous solver $\pi_{t-1}^{\mathrm{sol}}$. It samples a group of $n$ candidate responses, $\{\hat y_j\}_{j=1}^{n}
    \sim
    \pi_{t-1}^{\mathrm{sol}}(\cdot\mid q,\mathcal{R})$.
Each response receives a binary answer reward $r_j =
    \mathbf{1}\{\hat y_j=a\}$,
where $a$ is the target answer. Let $\overline r$ and $\hat\sigma$ be the empirical mean and standard deviation of $\{r_j\}_{j=1}^{n}$ within the group. The standardized advantage is
$A_j =
    \nicefrac{r_j-\overline r}{\hat\sigma+\delta_0},$
where $\delta_0>0$ prevents division by zero.

Let $\pi_{\mathrm{ref}}^{\mathrm{sol}}$ be a frozen reference solver policy. In our experiments, this reference is the solver initialization at the beginning of Phase B. GRPO maximizes the clipped surrogate
\begin{multline}
     \mathcal{J}^{\mathrm{GRPO}}_t
    =
    \mathbb{E}
    \left[
        \frac{1}{n}
        \sum_{j=1}^{n}
        \min
        \left(
            \rho_j A_j,
            \mathrm{clip}(\rho_j,1-\epsilon,1+\epsilon)A_j
        \right)
    \right] \\
    -
    \beta
    \mathbb{E}_{q}
    \left[
        \mathrm{KL}
        \left(
            \pi_t^{\mathrm{sol}}(\cdot\mid q,\mathcal{R})
            \,\middle\|\,
            \pi_{\mathrm{ref}}^{\mathrm{sol}}(\cdot\mid q,\mathcal{R})
        \right)
    \right],~~\rho_j
    =
    \frac{
        \pi_t^{\mathrm{sol}}(\hat y_j\mid q,\mathcal{R})
    }{
        \pi_{t-1}^{\mathrm{sol}}(\hat y_j\mid q,\mathcal{R})
    },
    \label{eq:grpo}
\end{multline}
where $\epsilon\in(0,1)$ is the clipping width, and $\beta>0$ is the KL coefficient.

The importance ratio $\rho_j$ compares the probability of the sampled response $\hat y_j$ under the current solver $\pi_t^{\mathrm{sol}}$ with its probability under the rollout policy $\pi_{t-1}^{\mathrm{sol}}$. In contrast, the KL term compares the full conditional output distribution of the current solver with the frozen reference solver $\pi_{\mathrm{ref}}^{\mathrm{sol}}$ for the same question $q$ and search engine $\mathcal{R}$. The ratio controls the policy-gradient update on sampled responses, while the KL term regularizes the entire updated policy. EVE-Agent keeps this optimization infrastructure unchanged: it uses HRPO for the proposer and GRPO for the solver, and changes only the reward design and, optionally, the source-document selector.

\section{Method}\label{sec:method}

The difficulty reward of Eq.~\eqref{eq:dz-reward-raw} encourages the proposer whenever the solver is uncertain about a generated question, but it does not verify whether the proposer's answer is supported by any source span: the same reward is paid whether the underlying evidence is genuine or unrelated. Section~\ref{subsec:exp-bottleneck} documents this failure mode empirically; we report it as a motivating diagnostic and devote the remainder of this section to the components that close the gap. Section~\ref{subsec:verifier} introduces the evidence verifier, which scores the proposer-emitted span by its causal effect on the solver's answer accuracy. Section~\ref{subsec:solver} reuses the same span as the supervision target during solver training. Section~\ref{subsec:selector} describes an optional cluster bandit that diversifies the source-document distribution, and Section~\ref{subsec:schedule} explains how the proposer and solver are updated in two sequential phases. Figure~\ref{fig:phaseA} summarizes the resulting Phase A dataflow. The backbone, retriever, search tool, and policy-optimization framework remain unchanged.

\begin{figure}[t]
    \centering
    \includegraphics[width=0.95\linewidth]{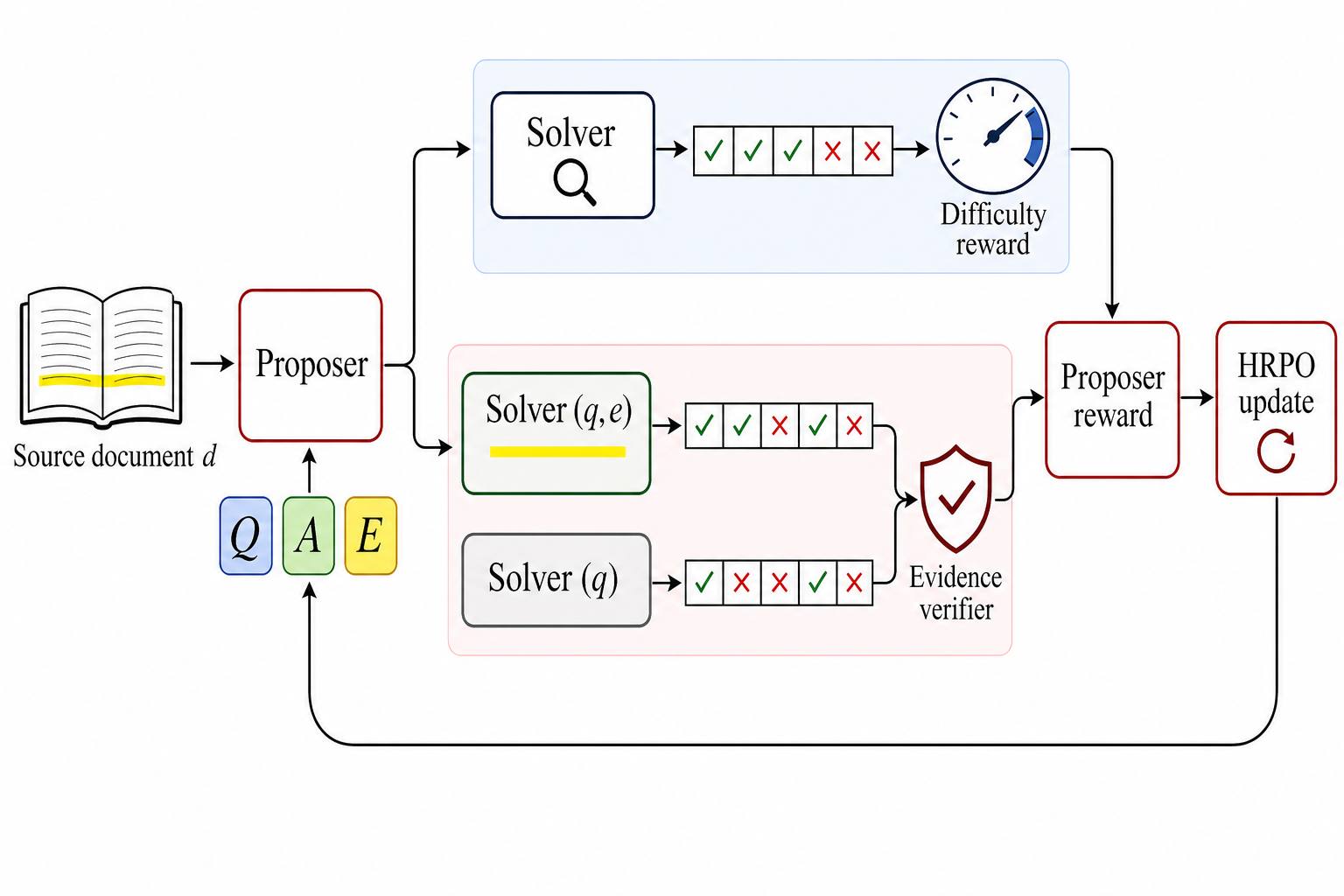}
    \caption{\textbf{One Phase A iteration of EVE-Agent.} The proposer generates a question--answer--evidence triple from the source document $d$. The solver attempts the question with the search tool, producing the difficulty reward of Eq.~\eqref{eq:dz-reward-raw}; in parallel, single-turn search-disabled rollouts of the solver with and without the evidence span produce the evidence verifier of Eq.~\eqref{eq:vqe}. These two signals combine with the format and brevity terms in the proposer reward of Eq.~\eqref{eq:prop-reward}, which drives one HRPO update of the proposer. The backbone, the retriever, and the search tool are unchanged.}
    \label{fig:phaseA}
\end{figure}

\subsection{Evidence verifier for the proposer}\label{subsec:verifier}

\paragraph{Proposer output.}
EVE-Agent changes the proposer output from a question--answer pair to a question--answer--evidence triple. Given a source document $d$ and access to the search engine $\mathcal{R}$, the proposer generates
\begin{equation}
    (q,a,e)\sim \pi_t^{\mathrm{pro}}(\cdot\mid d,\mathcal{R}),
\end{equation}
where $q$ is the generated question, $a$ is the target answer, and $e$ is the evidence span. The evidence span must be copied verbatim from either the source document $d$ or one of the snippets returned by the search engine during the proposer's rollout. This constraint ensures that the evidence is not merely a free-form explanation, but a concrete text span that can be checked against the corpus.

After generation, we parse the output and apply a simple validity filter. Let $\textsc{Norm}(\cdot)$ denote the standard answer-normalization function that lowercases text, removes articles, strips punctuation, and collapses whitespace. A rollout is called \emph{valid} if both $q$ and $a$ are non-empty and the normalized answer is not literally contained in the normalized question: $q\neq \emptyset,~a\neq \emptyset,~
    \textsc{Norm}(a)\not\subseteq \textsc{Norm}(q)$.
The last condition removes degenerate questions that reveal their own answers. Invalid rollouts receive only the format-related reward defined below and are excluded from the evidence verifier.

\paragraph{Format reward.}
Before scoring whether the evidence helps the solver, we first assign a lightweight format reward that checks whether the proposer output is structurally usable. The reward $F_{\mathrm{fmt}}(q,a,e,d,h)\in[0,1]$
depends on the generated question $q$, answer $a$, evidence span $e$, source document $d$, and prescribed hop count $h$. Here, $h$ denotes the intended number of reasoning steps, or equivalently the intended amount of multi-hop search behavior, for the generated question.

The format reward combines four equally weighted components. The first component is an integrity score that is always equal to $1$ for parsed outputs that reach this stage. The remaining three components are sub-scores in $[0,1]$: $F_{\mathrm{think}}$ rewards the presence of an explicit planning step, $F_{\mathrm{tool}}(h)$ rewards the expected number and syntactic correctness of tool calls for an $h$-hop rollout, and $F_{\mathrm{ans}}$ rewards a concise answer that is consistent with the available context. We define
\begin{equation}
    F_{\mathrm{fmt}}(q,a,e,d,h) = \frac{1}{4} \left( 1 + F_{\mathrm{think}} + F_{\mathrm{tool}}(h) + F_{\mathrm{ans}} \right).
    \label{eq:fmt}
\end{equation}
This term is intentionally simple: it ensures that the proposer follows the required output protocol, but it is not meant to judge whether the evidence actually supports the answer. That role is handled by the evidence verifier below. The precise definitions of the three sub-scores are provided in Appendix~\ref{app:format}.

\paragraph{Evidence-quality score.}
We now define the signal that measures whether the proposer-provided evidence is actually useful for answering the generated question. Consider a valid triple $(q,a,e)$, where $q$ is the question, $a$ is the target answer, and $e$ is the evidence span emitted by the proposer. The key idea is to compare two answer probabilities: one when the solver is given both the question and the evidence, and one when it is given the question alone.

Let $\widetilde{\pi}_{\mathrm{sol},t}(\cdot\mid q,e)$ denote the current solver at round $t$ under a single-turn answering protocol: the solver receives the question $q$ and evidence span $e$, but it is not allowed to make any additional search calls. Similarly, let $\widetilde{\pi}_{\mathrm{aux},t}(\cdot\mid q)$ denote an auxiliary scorer under the same single-turn, search-disabled protocol, but conditioned only on the question $q$. We define
\begin{equation}
    \begin{aligned}
        p_{+}^{t}(q,e,a)
        &\coloneqq
        \mab{P}_{\hat a\sim \widetilde{\pi}_{\mathrm{sol},t}(\cdot\mid q,e)}
        \left[
            \hat a=a
        \right], \\
        p_{-}^{t}(q,a)
        &\coloneqq
        \mab{P}_{\hat a\sim \widetilde{\pi}_{\mathrm{aux},t}(\cdot\mid q)}
        \left[
            \hat a=a
        \right].
    \end{aligned}
    \label{eq:pplus-pminus}
\end{equation}
The first quantity, $p_{+}^{t}(q,e,a)$, is the probability of producing the correct answer when the evidence is provided. The second quantity, $p_{-}^{t}(q,a)$, is the probability of producing the same answer without access to that evidence. We define the evidence verifier as their difference:
\begin{equation}
    V_t(q,e,a)
    \coloneqq
    p_{+}^{t}(q,e,a)
    -
    p_{-}^{t}(q,a).
    \label{eq:vqe}
\end{equation}
Since both probabilities lie in $[0,1]$, the verifier score satisfies $V_t(q,e,a)\in[-1,1]$.

This construction isolates the marginal contribution of the provided evidence. Search is disabled in both conditions, so the score does not reward the solver for finding new information after receiving the question. Instead, it asks a narrower and more auditable question: does the span $e$ itself make the answer $a$ easier to produce? A positive verifier score means that conditioning on $e$ increases the solver's probability of generating the target answer. A score near zero means that the evidence provides little additional information beyond the question. A negative score indicates that the evidence makes the target answer less likely, which is consistent with the span being misleading or irrelevant.

In our experiments, the auxiliary scorer uses the same weights as the current solver. Thus, $\widetilde{\pi}_{\mathrm{aux},t}$ and $\widetilde{\pi}_{\mathrm{sol},t}$ differ only in their inputs: the former receives $q$, whereas the latter receives $(q,e)$. Under this choice, $V_t(q,e,a)$ directly measures the conditional answer-accuracy gain induced by the evidence for the current solver. The framework also supports a variant in which the auxiliary scorer is a separately hosted frozen model, but we do not use that configuration in the experiments reported here.

\paragraph{Estimator.}
The verifier score in Eq.~\eqref{eq:vqe} depends on two answer probabilities, which we estimate by Monte Carlo sampling. For a fixed valid triple $(q,a,e)$ and an integer $m\ge 1$, we draw $m$ independent answers from the solver with evidence,
\begin{equation}
    \hat a_{+}^{(j)}
    \sim
    \widetilde{\pi}_{\mathrm{sol},t}(\cdot\mid q,e),
    \qquad
    j=1,\ldots,m,
\end{equation}
and $m$ independent answers from the auxiliary scorer without evidence,
\begin{equation}
    \hat a_{-}^{(j)}
    \sim
    \widetilde{\pi}_{\mathrm{aux},t}(\cdot\mid q),
    \qquad
    j=1,\ldots,m.
\end{equation}
We then estimate the evidence verifier by the difference between the two empirical accuracies:
\begin{equation}
    \widehat V_{t,m}(q,e,a)
    \coloneqq
    \frac{1}{m}
    \sum_{j=1}^{m}
    \mathbf{1}\{\hat a_{+}^{(j)}=a\}
    -
    \frac{1}{m}
    \sum_{j=1}^{m}
    \mathbf{1}\{\hat a_{-}^{(j)}=a\}.
    \label{eq:vhat-mc}
\end{equation}
This estimator is unbiased, and its conditional variance is bounded by $1/(2m)$; both facts are formalized and proved as Proposition~\ref{prop:marginal} in Appendix~\ref{app:theory}. It requires $2m$ additional single-turn decodes per valid proposer rollout---$m$ with evidence and $m$ without---an overhead that is modest because verifier rollouts are search-disabled, whereas the main solver rollouts are multi-turn interactions that may call the search engine several times. We use $m=5$ throughout. A teacher-forced log-probability variant of the verifier is implemented but not used in the reported experiments because the sampling-based estimator is already sufficiently efficient.

\paragraph{Brevity bonus.}
The evidence span should be informative but not unnecessarily long. If the proposer copies a very long passage, the verifier becomes less useful because the span may contain many irrelevant facts or reveal the answer without identifying the specific supporting context. Conversely, an extremely short span may collapse to answer leakage rather than evidence. To encourage concise and targeted evidence, we introduce a brevity bonus. Let $|e|$ denote the number of tokens in the evidence span under the proposer tokenizer. We define
\begin{equation}
    B(e)
    \coloneqq
    \max
    \left(
        0,
        1-\frac{|e|}{L_{\max}}
    \right),
    \qquad
    L_{\max}=256.
    \label{eq:brevity}
\end{equation}
The bonus is largest for short spans and decreases linearly with length. It becomes zero once the evidence span reaches $L_{\max}$ tokens. Thus, this term discourages copy-everything behavior while still allowing enough context for multi-hop or entity-rich questions.

\paragraph{Proposer reward.}
The final proposer reward combines structural validity, question difficulty, evidence usefulness, and evidence brevity. For a rollout with source document $d$, prescribed hop count $h$, generated triple $(q,a,e)$, and $k$ correct solver predictions among $n$ trials, we define
\begin{equation}
    R^{\mathrm{pro}}_t(q,e,a;d,h,k)
    =
    \frac{1}{2}
    F_{\mathrm{fmt}}(q,a,e,d,h)
    +
    R^{\mathrm{DZ}}_t(q,a;k)
    +
    \lambda_V V_t(q,e,a)
    +
    \lambda_B B(e),
    \label{eq:prop-reward}
\end{equation}
where $\lambda_V\ge 0$ controls the strength of the evidence-verifiability term and $\lambda_B\ge 0$ controls the strength of the brevity bonus. In all experiments, we set
\begin{equation}
    (\lambda_V,\lambda_B)=(0.5,0.1).
\end{equation}

Each term has a distinct role. The format reward ensures that the proposer follows the required output protocol. The difficulty reward encourages questions near the solver's learning frontier. The verifier reward favors evidence spans that causally improve the solver's ability to produce the target answer. The brevity bonus discourages overly long evidence spans that would make the verifier less diagnostic. The prior self-evolving search agent~\citep{yue2026drzero} is recovered by setting
\begin{equation}
    (\lambda_V,\lambda_B)=(0,0),
\end{equation}
while keeping the same difficulty and format components. We optimize the proposer with the HRPO objective in Eq.~\eqref{eq:hrpo-objective}, replacing the original difficulty-only reward with $R^{\mathrm{pro}}_t$ and making no other changes to the optimization procedure.

\subsection{Solver reward}\label{subsec:solver}

After training the proposer with the reward in Eq.~\eqref{eq:prop-reward}, we freeze it and use it to construct the solver-training data. Specifically, we roll out the trained proposer over the corpus. Each valid rollout produces a triple $(q,a,e)$, where $q$ is the generated question, $a$ is the target answer, and $e$ is the proposer-provided evidence span. We then treat $e$ as the gold evidence for training the solver. This choice is important: the same evidence span that the proposer was rewarded for producing is reused as the supervision target for the solver.

The solver is trained with the GRPO objective in Eq.~\eqref{eq:grpo}. For each generated training instance, the solver is required to output both an answer $\hat a$ and an evidence span $\hat e$. We define the solver reward as
\begin{equation}
    R^{\mathrm{sol}}(\hat a,\hat e;a,e)
    =
    R_{\mathrm{correct}}(\hat a,a)
    +
    \lambda_E R_{\mathrm{evidence}}(\hat e,e),
    \label{eq:sol-reward}
\end{equation}
where $R_{\mathrm{correct}}$ evaluates answer correctness and $R_{\mathrm{evidence}}$ evaluates evidence recovery. The answer reward is an exact-match indicator after standard answer normalization:
\begin{equation}
    R_{\mathrm{correct}}(\hat a,a)
    =
    \mathbf{1}\{\textsc{EM}(\hat a,a)\}.
    \label{eq:answer-reward}
\end{equation}
Here, $\textsc{EM}(\hat a,a)$ equals true when the normalized predicted answer $\hat a$ exactly matches the normalized target answer $a$.

The evidence reward measures how well the solver's extracted evidence span $\hat e$ matches the proposer-provided evidence span $e$. We use the SQuAD-style token-level F1 score:
\begin{equation}
    R_{\mathrm{evidence}}(\hat e,e)
    =
    \mathrm{F1}_{\mathrm{tok}}
    \left(
        \textsc{Norm}(\hat e),
        \textsc{Norm}(e)
    \right),
    \label{eq:revidence}
\end{equation}
where $\textsc{Norm}$ is the normalization function defined earlier and $\mathrm{F1}_{\mathrm{tok}}$ is the harmonic mean of token-level precision and recall between the normalized predicted and target evidence spans. We set $\lambda_E=0.3$ in all experiments. This reward encourages the solver not only to answer correctly, but also to recover the evidence that supports the answer.

\subsection{Optional corpus selector}\label{subsec:selector}

The default self-evolution loop samples source documents uniformly from the corpus. While simple, uniform sampling can concentrate training on a narrow set of documents or question patterns that happen to receive high reward early in training. This may reduce curriculum diversity: the proposer can repeatedly generate similar questions that lie near the solver's current learning frontier, while underusing other topics and reasoning types.

To mitigate this issue, we include an optional corpus selector that can replace uniform document sampling. The selector is not required for the core evidence-verification mechanism, but it provides a way to diversify the self-generated curriculum. Its goal is to balance two forms of diversity. The first is topic diversity, controlled by clustering documents in embedding space. The second is question-type diversity, controlled by a small set of open-domain QA categories.

Concretely, a frozen sentence encoder maps each document $d\in\mathcal{D}$ to a fixed vector representation. At training round $t$, these document embeddings are partitioned into $K_t$ clusters, where
\begin{equation}
    K_t
    =
    K_0+\lfloor \alpha t\rfloor.
    \label{eq:selector-cluster-schedule}
\end{equation}
Here, $K_0$ is the initial number of clusters, $\alpha\ge 0$ controls how quickly the clustering granularity increases, and $\lfloor\cdot\rfloor$ denotes the floor function. Larger values of $K_t$ correspond to a finer partition of the corpus.

The selector uses two UCB1 bandits~\citep{auer2002finite}. The first is a cluster bandit whose arms correspond to the $K_t$ document clusters. The second is a task-type bandit whose arms correspond to five question types: \textsc{factual}, \textsc{comparison}, \textsc{causal}, \textsc{temporal}, and \textsc{aggregation}. At each sampling step, the cluster bandit selects a topic cluster and the task-type bandit selects a desired question type. A source document is then sampled from the selected cluster with probability proportional to two factors: its distance from the cluster centroid and the inverse of its previous usage count. This favors documents that are both less typical within the cluster and less frequently sampled, encouraging coverage of boundary cases and reducing repeated use of the same documents.

The bandits are updated offline between self-evolution iterations. Their feedback reward combines a diversity term, which penalizes repeatedly selecting already overused clusters or task types, with a utility term, which compares the reward of the current generated sample to the running average reward for the corresponding arm. In this way, the selector encourages exploration without ignoring which parts of the corpus are currently useful for training. Since the selector is orthogonal to the evidence verifier, we do not isolate its empirical effect in the present experiments. The full algorithmic specification is given in Appendix~\ref{app:selector-full}.

\subsection{Two-phase training schedule}\label{subsec:schedule}

We train the proposer and the solver in two sequential phases rather than updating them jointly. This design has two motivations. First, it reduces compute: only one model is optimized with policy gradients at a time. Second, it improves stability. The evidence verifier depends on the solver, so updating the solver while simultaneously using it to score proposer outputs would make the reward landscape non-stationary and difficult to interpret.

In \emph{Phase A}, the solver is kept fixed at its initialization, and the proposer is trained with HRPO using the reward $R^{\mathrm{pro}}_{t}$ in Eq.~\eqref{eq:prop-reward}. The auxiliary scorer used in the verifier shares weights with this fixed solver. For each valid proposer rollout, the verifier score is estimated by the Monte Carlo estimator $\widehat V_{t,m}$ in Eq.~\eqref{eq:vhat-mc}; we use $m=5$ samples. Thus, Phase A teaches the proposer to generate not only difficult questions, but also evidence spans that improve the fixed solver's ability to recover the target answer.

At the end of Phase A, we freeze the trained proposer and use it to generate the solver-training set. We roll it out over a held-out shard of the corpus with the full multi-turn search tool enabled. Each valid rollout contributes one triple $(q,a,e)$, where $q$ is the generated question, $a$ is the target answer, and $e$ is the proposer-provided evidence span. We store $e$ as the gold evidence for the corresponding question--answer pair.

In \emph{Phase B}, the proposer is frozen, and the solver is trained with GRPO using the reward $R^{\mathrm{sol}}$ in Eq.~\eqref{eq:sol-reward}. The solver is required to output both an answer and an evidence span, and it is rewarded for both answer correctness and evidence recovery. This phase transfers the evidence-verifiable curriculum produced by the proposer into the solver.

Relative to a standard self-evolving search agent, the modification is intentionally local. EVE-Agent keeps the backbone model, retriever, search tool, hop grouping, and the HRPO and GRPO optimization objectives unchanged. The core change is the reward design: the proposer is rewarded for generating useful evidence, and the solver is rewarded for recovering that evidence. Two basic guarantees of the reward design---a closed form and a unique interior maximizer for the inherited difficulty reward, and an unbiased, bounded-variance interpretation of the evidence verifier as a marginal answer-accuracy gain---are stated and proved in Appendix~\ref{app:theory}, and Algorithms~\ref{alg:phasea}--\ref{alg:phaseb} in Appendix~\ref{app:pseudocode} give the full data flow.

\section{Experiments}\label{sec:exp}

Our experimental study is organized around two questions, addressed in turn. First, does the difficulty-only reward of prior self-evolving search agents leave a measurable evidence-grounding gap? Section~\ref{subsec:exp-bottleneck} answers this question and isolates the empirical motivation for the verifier. Second, does the evidence verifier of Eq.~\eqref{eq:vqe} close this gap while preserving---or improving---answer accuracy across benchmarks? Section~\ref{subsec:main-results} answers this question and isolates the verifier's contribution under matched compute and matched search tools. Throughout the section we report per-benchmark numbers in tables and discuss the qualitative picture in prose; full quantitative details and additional protocol-specific information are deferred to Appendix~\ref{app:impl-details} and Appendix~\ref{app:bottleneck-full}.

\subsection{Experimental setup}\label{subsec:setup}

\paragraph{Models and search tool.}
The proposer, the solver, and the auxiliary scorer in Eq.~\eqref{eq:vqe} share the Qwen2.5-3B-Instruct~\citep{qwen25} backbone, and the auxiliary scorer reuses the current solver weights. All systems share the same retrieval pipeline: passages from the FlashRAG Wikipedia-2018 snapshot are encoded by E5-base-v2 and indexed by FAISS-IVF; each search call returns the top three passages, and a multi-turn rollout permits at most five assistant turns. Full indexing parameters and decoding settings are listed in Appendix~\ref{app:impl-details}.

\paragraph{Training schedule and key hyperparameters.}
Both phases use a single $8\times$B200 node, a global batch size of $256$, and run for $50$ policy-gradient steps. The proposer-side coefficients $(\lambda_{V},\lambda_{B})$ that enter the proposer reward of Eq.~\eqref{eq:prop-reward} are set to $(0.5,0.1)$, and the brevity cutoff is $L_{\max}=256$ tokens; the verifier Monte Carlo budget in Eq.~\eqref{eq:vhat-mc} is $m=5$. The solver-side coefficient $\lambda_{E}$ in Eq.~\eqref{eq:sol-reward} is set to $0.3$. Hop counts $h\in\{1,2,3,4\}$ are sampled in ratio $4{:}3{:}2{:}1$. Because the auxiliary scorer shares its weights with the solver, the verifier adds only $2m=10$ single-turn, search-disabled decodes per valid proposer rollout, which is negligible compared with the multi-turn search rollouts. The full list of optimizer settings and additional implementation choices is given in Appendix~\ref{app:impl-details}, with hyperparameters summarized in Appendix~\ref{app:hp}.

\paragraph{Benchmarks and metrics.}
We evaluate on seven open-domain QA datasets: NaturalQuestions~\citep{kwiatkowski2019natural}, TriviaQA~\citep{joshi2017triviaqa}, PopQA~\citep{mallen2023popqa}, HotpotQA~\citep{yang2018hotpotqa}, 2WikiMultiHopQA~\citep{ho20202wikimultihopqa}, MuSiQue~\citep{trivedi2022musique}, and Bamboogle~\citep{press2023measuring}. Each system is required to emit an answer and a supporting evidence span. We report three complementary metrics: answer exact match (EM) after answer normalization; an evidence score judged by GPT-4.1 from the question, the gold answer, and the emitted span; and the joint rate at which the answer is correct \emph{and} the evidence is judged supporting. The average column is the unweighted mean over the seven benchmarks.

\paragraph{Compared systems.}
We compare four systems trained or evaluated under matched protocols. \emph{Initial (no search)} and \emph{Initial (search)} are the untrained backbone evaluated without and with the search tool, respectively. \emph{Dr.~Zero} is a faithful re-implementation of~\citet{yue2026drzero} under the same backbone, retrieval corpus, tool, rollout budget, and wall-clock budget. \emph{EVE-Agent} adds the evidence-verifier reward and the matching solver evidence term while leaving the proposer, solver, backbone, and search tool otherwise identical to Dr.~Zero. This design isolates the contribution of evidence-aware reward shaping.

\subsection{Evidence-grounding bottleneck of prior systems}\label{subsec:exp-bottleneck}

\paragraph{Objective.}
We first quantify the failure mode that motivates EVE-Agent: even when a self-evolving search agent attains competitive answer accuracy, the spans it emits as evidence may bear no real relation to the answer. The goal of this experiment is to verify that the difficulty-only reward of prior systems leaves a measurable gap between \emph{producing} an answer and \emph{justifying} it.

\paragraph{Protocol-specific details.}
On top of the shared setup of Section~\ref{subsec:setup}, we sample $1{,}000$ test instances per benchmark, except for Bamboogle, for which we use the full $125$-instance split. Each instance is decoded once by every system with the search tool enabled, and the emitted evidence span is passed to the external judge. We focus on a representative subset of four datasets that spans both single-hop and multi-hop regimes; the full breakdown is given in Appendix~\ref{app:bottleneck-full}.

\paragraph{Results.}
Table~\ref{tab:bottleneck} reports the judged evidence score and the joint answer-and-evidence rate on the four-dataset subset; the corresponding answer-accuracy numbers are given in Appendix~\ref{app:bottleneck-full}. Two patterns are immediate. First, the prior self-evolving search agent does not narrow the evidence-quality gap relative to the untrained backbone, even though its answer accuracy is markedly higher: on the open-domain datasets the judge accepts only a similar fraction of its spans as supporting. Second, the joint correctness rate, which credits an instance only when the answer is right \emph{and} the evidence is supporting, is correspondingly low for the prior system---comparable to the untrained baselines on most benchmarks. EVE-Agent improves both quantities on every dataset in this subset except 2WikiMultiHopQA, where the evidence score is competitive but not the best.

\begin{table}[t]
    \centering
    \caption{Evidence-grounding diagnostics on a representative subset of benchmarks ($1{,}000$ test instances each, except Bamboogle with $125$). \emph{Prior} is a faithful re-implementation of~\citet{yue2026drzero}. All systems use the same backbone and search tool; the external judge sees only the model-emitted evidence span. The full breakdown is given in Appendix~\ref{app:bottleneck-full}.}
    \label{tab:bottleneck}
    \setlength{\tabcolsep}{4pt}
    \footnotesize
    \begin{tabular}{l c c c c c c c}
        \toprule
        & \multicolumn{4}{c}{Evidence score (judge)} & \multicolumn{3}{c}{Answer-and-evidence} \\
        \cmidrule(lr){2-5}\cmidrule(lr){6-8}
        Method & NQ & TriviaQA & HotpotQA & 2Wiki & NQ & TriviaQA & HotpotQA \\
        \midrule
        Initial (no search)             & $0.274$ & $0.424$ & $0.247$ & $0.199$ & $0.052$ & $0.180$ & $0.048$ \\
        Initial (search)                & $0.374$ & $0.373$ & $0.199$ & $0.173$ & $0.024$ & $0.093$ & $0.023$ \\
        Prior~\citep{yue2026drzero}     & $0.242$ & $0.289$ & $0.209$ & $\mathbf{0.205}$ & $0.021$ & $0.098$ & $0.035$ \\
        \rowcolor{blue!5}
        \textbf{EVE-Agent}              & $\mathbf{0.484}$ & $\mathbf{0.582}$ & $\mathbf{0.332}$ & $0.166$ & $\mathbf{0.242}$ & $\mathbf{0.342}$ & $\mathbf{0.152}$ \\
        \bottomrule
    \end{tabular}
\end{table}

\paragraph{Discussion.}
The bottleneck is not that prior systems omit evidence: Appendix~\ref{app:bottleneck-full} shows that they emit a syntactically valid evidence block in over $90\%$ of rollouts, comparable to the initial backbone. The bottleneck is that the emitted block frequently fails to justify the predicted answer, which is consistent with the difficulty reward's design: it audits whether a question is challenging, not whether it is supported. The evidence verifier of Section~\ref{subsec:verifier} addresses this by rewarding spans that causally raise the solver's probability of producing the target answer.

\subsection{Main results across benchmarks}\label{subsec:main-results}

\paragraph{Objective.}
Having identified the bottleneck, we now ask whether the evidence verifier closes it across the full benchmark suite without sacrificing answer accuracy. We compare EVE-Agent against the matched Dr.~Zero re-implementation and the two untrained baselines on all seven datasets and report each of the three metrics in turn (answer EM, evidence score, joint correctness).

\paragraph{Protocol-specific details.}
We reuse the setup of Section~\ref{subsec:setup} verbatim; the only change relative to Section~\ref{subsec:exp-bottleneck} is that all seven benchmarks are now included. Decoding is greedy and the search tool is enabled for both the prior system and EVE-Agent. The external judge configuration is unchanged.

\paragraph{Answer accuracy.}
Answer exact match is the standard summary metric for open-domain QA; we report it first because a verifier that improved evidence quality at the cost of answer correctness would not be useful. Table~\ref{tab:main-answer} reports answer EM. EVE-Agent attains the best average EM and is strongest on five of the seven benchmarks---NQ, TriviaQA, PopQA, HotpotQA, and 2WikiMultiHopQA---each of which has a meaningfully large evaluation pool. On MuSiQue and Bamboogle the picture is mixed: Dr.~Zero edges out EVE-Agent on MuSiQue by a small margin, and the untrained no-search backbone is highest on the small Bamboogle split. Overall, the evidence-oriented reward does not trade off answer correctness for explanation format; on the contrary, accuracy improves under the same compute and search budget as the prior self-evolving search agent.

\begin{table*}[t]
    \centering
    \caption{Answer accuracy (exact match) on seven open-domain QA benchmarks.}
    \label{tab:main-answer}
    \setlength{\tabcolsep}{4pt}
    \renewcommand{\arraystretch}{0.95}
    \footnotesize
    \begin{tabular}{l c c c c c c c c}
        \toprule
        Method & NQ & TriviaQA & PopQA & HotpotQA & 2WikiMQA & MuSiQue & Bamboogle & Avg \\
        \midrule
        Initial (no search) & 0.068 & 0.219 & 0.081 & 0.057 & 0.034 & 0.015 & \textbf{0.200} & 0.096 \\
        Initial (search)    & 0.032 & 0.125 & 0.055 & 0.029 & 0.013 & 0.010 & 0.024          & 0.041 \\
        Dr.~Zero            & 0.069 & 0.257 & 0.134 & 0.110 & 0.077 & \textbf{0.055} & 0.104  & 0.115 \\
        \rowcolor{blue!5}
        EVE-Agent           & \textbf{0.289} & \textbf{0.437} & \textbf{0.300} & \textbf{0.209} & \textbf{0.176} & 0.050 & 0.088 & \textbf{0.221} \\
        \bottomrule
    \end{tabular}
\end{table*}

\paragraph{Evidence quality.}
Evidence quality is the central diagnostic of the bottleneck of Section~\ref{subsec:exp-bottleneck}: only this metric reveals whether the predicted answer is paired with a source span that a careful reader could use to verify it. Table~\ref{tab:main-evidence} reports the judged evidence score across all seven benchmarks. EVE-Agent substantially improves evidence support on all single-hop benchmarks---NQ, TriviaQA, and PopQA---and on HotpotQA, and the average evidence score across the seven benchmarks is the highest of the four systems by a clear margin. The remaining benchmarks are mixed: on 2WikiMultiHopQA Dr.~Zero is slightly stronger; on MuSiQue the search-equipped initial backbone narrowly edges out EVE-Agent; and on Bamboogle the small evaluation split favors the no-search backbone. These exceptions notwithstanding, the verifier-shaped reward produces evidence that the external judge accepts as supporting more often than any baseline, including the search-equipped initial backbone. Crucially, this improvement is obtained with the same backbone, retriever, and search tool used by Dr.~Zero, so it can be attributed to the reward design rather than to additional capacity or retrieval signal.

\begin{table*}[t]
    \centering
    \caption{Evidence score judged by an external GPT-4.1 evaluator. The judge sees only the question, the gold answer, and the model-emitted evidence span.}
    \label{tab:main-evidence}
    \setlength{\tabcolsep}{4pt}
    \renewcommand{\arraystretch}{0.95}
    \footnotesize
    \begin{tabular}{l c c c c c c c c}
        \toprule
        Method & NQ & TriviaQA & PopQA & HotpotQA & 2WikiMQA & MuSiQue & Bamboogle & Avg \\
        \midrule
        Initial (no search) & 0.274 & 0.424 & 0.153 & 0.247 & 0.199 & 0.093 & \textbf{0.376} & 0.252 \\
        Initial (search)    & 0.374 & 0.373 & 0.298 & 0.199 & 0.173 & \textbf{0.103} & 0.176 & 0.242 \\
        Dr.~Zero            & 0.242 & 0.289 & 0.208 & 0.209 & \textbf{0.205} & 0.086 & 0.128 & 0.195 \\
        \rowcolor{blue!5}
        EVE-Agent           & \textbf{0.484} & \textbf{0.582} & \textbf{0.392} & \textbf{0.332} & 0.166 & 0.101 & 0.136 & \textbf{0.313} \\
        \bottomrule
    \end{tabular}
\end{table*}

\paragraph{Joint answer-and-evidence correctness.}
The joint metric is the strictest of the three: an instance is credited only when the predicted answer matches the gold answer \emph{and} the emitted span is judged supporting. From the perspective of evidence verifiability, this is the regime that matters most, because it isolates the cases in which the agent's output is simultaneously correct and auditable. Table~\ref{tab:main-joint} reports the result across all seven benchmarks. EVE-Agent obtains the best average score and is strongest on six of seven benchmarks; on the average column it improves over the matched Dr.~Zero re-implementation by a wide margin. The improvement is not a redistribution between the previous two metrics: EVE-Agent's answer-accuracy gains are concentrated in instances whose evidence is also judged sufficient, which is precisely the population that can be reused as reliable training signal for downstream learning or as a basis for human inspection of the curriculum.

\begin{table*}[t]
    \centering
    \caption{Joint answer-and-evidence score. An instance is counted only when the answer is correct \emph{and} the emitted evidence is judged supporting by the external judge.}
    \label{tab:main-joint}
    \setlength{\tabcolsep}{4pt}
    \renewcommand{\arraystretch}{0.95}
    \footnotesize
    \begin{tabular}{l c c c c c c c c}
        \toprule
        Method & NQ & TriviaQA & PopQA & HotpotQA & 2WikiMQA & MuSiQue & Bamboogle & Avg \\
        \midrule
        Initial (no search) & 0.052 & 0.180 & 0.069 & 0.048 & 0.016 & 0.015 & \textbf{0.184} & 0.081 \\
        Initial (search)    & 0.024 & 0.093 & 0.042 & 0.023 & 0.008 & 0.008 & 0.024          & 0.032 \\
        Dr.~Zero            & 0.021 & 0.098 & 0.059 & 0.035 & 0.035 & 0.023 & 0.040          & 0.044 \\
        \rowcolor{blue!5}
        EVE-Agent           & \textbf{0.242} & \textbf{0.342} & \textbf{0.264} & \textbf{0.152} & \textbf{0.070} & \textbf{0.038} & 0.064 & \textbf{0.167} \\
        \bottomrule
    \end{tabular}
\end{table*}

\paragraph{Discussion.}
Read together, Tables~\ref{tab:main-answer}--\ref{tab:main-joint} support the central claim of the paper. The evidence verifier of Eq.~\eqref{eq:vqe} produces a training-time signal whose gains are not bought by relaxing answer correctness, by overfitting to evidence presence, or by exploiting additional compute relative to the prior system. The answer accuracy and the evidence quality improve together, and the strict joint metric---the metric most aligned with evidence verifiability---exhibits the largest relative gap to prior work. The pattern is consistent across single-hop and multi-hop benchmarks, with the two exceptions involving either a small evaluation pool (Bamboogle, $125$ instances) or a regime in which the prior system already matches the untrained backbone on evidence grounding (2WikiMultiHopQA). The persistence of the improvement on the strict joint metric is, in our view, the cleanest summary of the verifier's effect: under matched compute and matched search tools, EVE-Agent generates curricula and trains solvers whose answers are simultaneously more often correct and more often verifiable.

\section{Related work}\label{sec:related}

\paragraph{Data-free self-evolving reasoning and search agents.}
Absolute Zero~\citep{zhao2025absolute} introduced the data-free
paradigm with a Python-interpreter oracle and R-Zero~\citep{huang2026rzero}
generalized it through a challenger--solver decoupling. Recent
work~\citep{yue2026drzero} ports the loop to multi-turn search agents
with hop-grouped relative policy optimization; SAGE~\citep{sage2026}
adds multi-agent critics and AReaL-SEA~\citep{areal2026} adds
multi-turn tool use to the same template. EVE-Agent differs
in injecting a data-free \emph{evidence verifier} so that the
proposer's reward depends on whether the emitted span causally helps
the trained solver, not only on whether the solver is uncertain.

\paragraph{Verifier-based and retrieval-augmented RL.}
Verifier-based rewards~\citep{lambert2024rlvr,shao2024deepseekmath,cobbe2021training}
and knowledge-graph verifiers~\citep{k2v2026} either assume an
external oracle or pay a heavy graph-construction cost; in contrast,
the verifier of Eq.~\eqref{eq:vqe} is defined entirely through the
trained solver, the proposer-emitted evidence, and the corpus.
Search-R1~\citep{jin2025search} and
R1-Searcher~\citep{song2025r1searcher} are retrieval-augmented RL
agents trained on supervised question--answer pairs, and Self-RAG and
IRCoT provide self-critic and interleaved retrieval
templates~\citep{asai2023selfrag,trivedi2023interleaving} that we
inherit at the proposer level.

\paragraph{Curriculum diversity.}
Semantic diversity rewards~\citep{wan2026dsdr} and the R-Diverse
template of~\citet{li2026rdiverse} act \emph{after} sampling by
down-weighting near-duplicates. The optional selector of
Section~\ref{subsec:selector} instead acts \emph{before} sampling via
a cluster bandit, in the lineage of UCB1~\citep{auer2002finite} and
non-stationary bandit
analyses~\citep{garivier2011onupper,lattimore2020bandit};
curriculum-learning methods~\citep{graves2017automated,matiisen2020teacher}
provide background on schedule design more broadly.

\paragraph{Evidence-grounded evaluation of search agents.}
Evidence-aware evaluation benchmarks~\citep{neoqa2025} formalize the
question of whether a model emits a supporting span; our verifier
turns this diagnostic into a training-time signal.

\section{Conclusion}\label{sec:conclusion}

We argued that \emph{evidence verifiability} should be treated as a first-class property of data-free self-evolving search agents. In existing systems, the proposer is rewarded for generating difficult questions, but no reward checks whether the predicted answer is grounded in a span that a careful reader could use to verify it. We documented the resulting bottleneck empirically on representative open-domain QA benchmarks: a faithful re-implementation of the prior self-evolving search agent answers questions competitively yet emits evidence whose judged quality is no better than that of an untrained backbone.

EVE-Agent closes this gap with a minimal extension that is local to the reward. The proposer is required to emit a verbatim source span together with its question and answer, and the span is credited only when its inclusion causally improves the current solver's ability to recover the target answer; the solver is in turn trained to reproduce both the answer and the supporting span. The backbone, the retriever, the search tool, and the policy-optimization framework remain unchanged.

Across seven open-domain QA benchmarks, under matched compute and matched search tools, this single change yields curricula and solvers whose outputs are simultaneously more often correct and more often verifiable: EVE-Agent improves on answer accuracy, on judged evidence quality, and---most importantly---on the strict joint metric that credits an instance only when both are adequate. The resulting training loop is auditable by construction. Every generated example carries an inspectable source span whose contribution can be checked against the current solver, and the curriculum can be reviewed instance by instance rather than treated as an opaque collection of question--answer pairs.

We see this as a small but concrete step toward \emph{safer self-evolution}. As data-free agents increasingly write their own training data, evidence grounding becomes a prerequisite for trust: an agent's improvements should not rely on training signal that cannot itself be verified, and the soundness of the learning process should be checkable from the data the agent produces, not only from the metrics it eventually reports. EVE-Agent treats evidence as a training-time object that can be inspected, scored, and reused; we expect that incorporating similar verifiability constraints will become standard practice as self-evolving search agents are deployed in settings where their training data, and not only their final answers, must be trusted.

\clearpage

\bibliographystyle{plainnat}
\bibliography{ref}

\appendix
\newpage

\section{Notation}\label{app:notation}

Table~\ref{tab:symbols} collects the principal symbols used in the
main text. Throughout the appendix, log-probabilities are natural
logarithms, indicator functions take values in $\{0,1\}$, and KL
divergences are non-negative and can be infinite.

\begin{table}[h]
    \centering
    \caption{Glossary of the principal symbols used throughout the paper.}
    \label{tab:symbols}
    \renewcommand{\arraystretch}{1.15}
    \setlength{\tabcolsep}{5pt}
    \small
    \begin{tabular}{@{}l l l@{}}
        \toprule
        \textbf{Symbol} & \textbf{Meaning} & \textbf{First used} \\
        \midrule
        $\mac{D}$                                 & source corpus                                                                              & Section~\ref{sec:prelim} \\
        $d\in\mac{D}$                             & source document                                                                            & Section~\ref{sec:prelim} \\
        $q,\,a,\,e$                               & question, answer, evidence span ($e$ verbatim from $d$ or a snippet)                       & Section~\ref{sec:prelim} \\
        $h\in\{1,2,3,4\}$                         & prescribed hop count of the rollout                                                        & Section~\ref{subsec:verifier} \\
        $\pi_{t}^{\mathrm{pro}},\pi_{t}^{\mathrm{sol}}$ & proposer / solver policies at round $t$                                              & Section~\ref{sec:prelim} \\
        $\widetilde\pi_{\mathrm{sol},t}$          & solver under the single-turn search-disabled prompt                                       & Eq.~\eqref{eq:pplus-pminus} \\
        $\widetilde\pi_{\mathrm{aux},t}$          & auxiliary scorer (in our runs: weight-swapped solver)                                     & Eq.~\eqref{eq:pplus-pminus} \\
        $p_{+}^{t}(q,e,a),p_{-}^{t}(q,a)$         & with-evidence and no-evidence answer accuracies                                            & Eq.~\eqref{eq:pplus-pminus} \\
        $V_{t}(q,e,a)$                            & evidence-quality score $p_{+}^{t}-p_{-}^{t}$                                               & Eq.~\eqref{eq:vqe} \\
        $\widehat V_{t,m}$                        & Monte Carlo estimator of $V_{t}$ with $m$ samples                                         & Eq.~\eqref{eq:vhat-mc} \\
        $B(e)$                                    & brevity bonus, $L_{\max}=256$                                                              & Eq.~\eqref{eq:brevity} \\
        $F_{\mathrm{fmt}}$                        & 4-component proposer format reward                                                         & Eq.~\eqref{eq:fmt} \\
        $R^{\mathrm{DZ}}_{t}(q,a;k)$              & Dr.~Zero difficulty reward, $k\sim\mathrm{Bin}(n,p_{t}^{\mathrm{sol}})$                   & Eq.~\eqref{eq:dz-reward-raw} \\
        $\phi_{n}(p)$                             & population mean of $R^{\mathrm{DZ}}_{t}$                                                  & Eq.~\eqref{eq:dz-phi} \\
        $R^{\mathrm{pro}}_{t},R^{\mathrm{sol}}$   & proposer (Phase A) and solver (Phase B) rewards                                            & Eqs.~\eqref{eq:prop-reward},~\eqref{eq:sol-reward} \\
        $R_{\mathrm{correct}},R_{\mathrm{evidence}}$ & solver answer EM and evidence-F1 sub-rewards                                            & Eq.~\eqref{eq:sol-reward} \\
        $\lambda_{V},\lambda_{B},\lambda_{E}$     & EVE-Agent reward coefficients                                                              & Eqs.~\eqref{eq:prop-reward},~\eqref{eq:sol-reward} \\
        \midrule
        $z_{d}=\phi(d)\in\mab{R}^{D}$             & document embedding (E5-base-v2)                                                            & Section~\ref{subsec:selector} \\
        $K_{t}=K_{0}+\lfloor\alpha t\rfloor$      & cluster count schedule                                                                     & Section~\ref{subsec:selector} \\
        $\mac{C}_{t}=\{C_{t,k}\}_{k=1}^{K_{t}}$   & active clusters at round $t$                                                               & Section~\ref{subsec:selector} \\
        $N_{k},\,\overline R_{k}$                 & UCB1 pull count and empirical mean for arm $k$                                             & Appendix~\ref{app:selector-full} \\
        $\beta,\,\lambda_{u},\,\varepsilon$       & UCB exploration weight, utility weight, log smoothing                                      & Appendix~\ref{app:selector-full} \\
        \bottomrule
    \end{tabular}
\end{table}

\section{Theoretical results}\label{app:theory}

This appendix collects two basic properties of the reward design used throughout the paper. Lemma~\ref{lem:phi-n} characterizes the inherited difficulty reward as a unimodal function of the solver's per-trial success probability, and Proposition~\ref{prop:marginal} formalizes the evidence verifier as a marginal answer-accuracy gain with an unbiased, bounded-variance Monte Carlo estimator. Both results are referenced from the main text but were moved here to keep the body focused on the empirical contribution.

\subsection{Closed form and saturation of the difficulty reward}\label{app:theory-phi}

\begin{lemma}[Closed form and saturation of the difficulty reward]\label{lem:phi-n}
    Let $n\ge 2$ be the number of independent solver predictions, and let $p\in[0,1]$ be the probability that a single solver prediction matches the target answer. If $k\sim\mathrm{Bin}(n,p)$ denotes the number of correct predictions among the $n$ trials, then
    \begin{equation}
        \mathbb{E}_{k\sim\mathrm{Bin}(n,p)}
        \left[
            R^{\mathrm{DZ}}_{t}(q,a;k)
        \right]
        =
        \phi_{n}(p),
        \label{eq:lemma-dz-closed-form}
    \end{equation}
    where
    \begin{equation}
        \phi_{n}(p)
        \coloneqq
        \frac{n}{n-1}(1-p)\bigl(1-(1-p)^{n-1}\bigr).
        \label{eq:lemma-phi}
    \end{equation}
    The function $\phi_{n}:[0,1]\to[0,1]$ is continuous and unimodal, with $\phi_{n}(0)=\phi_{n}(1)=0$ and a unique interior maximizer
    \begin{equation}
        p^{\star}
        =
        1-n^{-1/(n-1)}.
        \label{eq:phi-maximizer}
    \end{equation}
\end{lemma}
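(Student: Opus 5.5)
The proof has two parts: an expectation computation that produces the closed form, and a one-variable calculus argument that establishes the shape of $\phi_n$.

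\textbf{Closed form.} Write $R^{\mathrm{DZ}}_t(q,a;k)=\mathbf{1}\{0<k<n\}\,\frac{n-k}{n-1}$. The factor $n-k$ already vanishes at $k=n$, so the upper cut-off $k<n$ is redundant. The indicator therefore only removes the $k=0$ term, and
\[
\mathbb{E}[R^{\mathrm{DZ}}_t]=\frac{1}{n-1}\Bigl(\mathbb{E}[n-k]-n\,\mathbb{P}(k=0)\Bigr).
\]
Since $k\sim\mathrm{Bin}(n,p)$, we have $\mathbb{E}[n-k]=n(1-p)$ and $\mathbb{P}(k=0)=(1-p)^n$. Substituting gives
\[
\frac{n}{n-1}\bigl((1-p)-(1-p)^n\bigr)=\frac{n}{n-1}(1-p)\bigl(1-(1-p)^{n-1}\bigr)=\phi_n(p).
\]
This step is routine.

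\textbf{Shape of $\phi_n$.} Substitute $x=1-p\in[0,1]$, so that $\phi_n=\frac{n}{n-1}\,g(x)$ with $g(x)=x-x^n$.

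1. Continuity is immediate, since $g$ is a polynomial.

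2. The endpoint values are $g(0)=g(1)=0$, so $\phi_n(0)=\phi_n(1)=0$.

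3. The derivative is $g'(x)=1-nx^{n-1}$. It is strictly decreasing on $(0,1)$ because $n\ge2$, and it changes sign from positive to negative exactly once, at $x^\star=n^{-1/(n-1)}\in(0,1)$.

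4. Hence $g$ is strictly increasing and then strictly decreasing. It is unimodal with a unique interior maximizer, and translating back gives $p^\star=1-n^{-1/(n-1)}$.

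\textbf{Range.} It remains to show $\phi_n\le1$. At the maximizer, $x^{\star\,n-1}=1/n$, so
\[
g(x^\star)=x^\star\Bigl(1-\tfrac1n\Bigr),\qquad \phi_n(p^\star)=\frac{n}{n-1}\cdot\frac{n-1}{n}\,x^\star=x^\star<1.
\]
Nonnegativity follows because $g\ge0$ on $[0,1]$. Thus $\phi_n$ maps $[0,1]$ into $[0,1]$.

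The only step needing care is the sign analysis of $g'$ together with the evaluation of the maximum value, and both are elementary.
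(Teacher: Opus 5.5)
Your proof is correct and follows the paper's route: the closed form comes from the binomial mean of $n-k$ minus the $k=0$ term, and the shape comes from the single sign change of the derivative (your substitution $x=1-p$ and the strict concavity of $x-x^n$ repackage the paper's computation of $\phi_n'$). The only real difference is the range bound, where the paper gets $\phi_n\in[0,1]$ because $\phi_n$ is the expectation of the $[0,1]$-valued reward $R^{\mathrm{DZ}}_t$, while you evaluate the peak $\phi_n(p^\star)=n^{-1/(n-1)}<1$ directly, which takes one more line but also identifies the maximum value.
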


\paragraph{Discussion.}
Lemma~\ref{lem:phi-n} makes explicit what the difficulty reward in Eq.~\eqref{eq:dz-reward-raw} encourages. The expected reward vanishes both when the solver almost never answers correctly and when it almost always does, and is largest at an intermediate success probability $p^{\star}$. This frontier is defined entirely by answer accuracy, however, and does not address whether the generated answer is supported by any source span---the motivation for the evidence verifier of Eq.~\eqref{eq:vqe}.

\begin{proof}[Proof of Lemma~\ref{lem:phi-n}]
Write $p_{t}^{\mathrm{sol}}(q,a)=p$ and let $k\sim\mathrm{Bin}(n,p)$. From Eq.~\eqref{eq:dz-reward-raw},
\begin{align*}
    \mab{E}\bigl[R^{\mathrm{DZ}}_{t}\bigr]
    &=
    \sum_{k=1}^{n-1}\binom{n}{k}p^{k}(1-p)^{n-k}\,\frac{n-k}{n-1} \\
    &=
    \frac{1}{n-1}\!\left[
        \sum_{k=0}^{n}\binom{n}{k}p^{k}(1-p)^{n-k}(n-k)
        -n(1-p)^{n}-0
    \right],
\end{align*}
where the $k=0$ term contributes $n(1-p)^{n}$ and the $k=n$ term contributes $0$. The full sum is the binomial mean of $n-k$, which equals $n(1-p)$, so the bracket equals
\begin{equation*}
    n(1-p)-n(1-p)^{n}
    =
    n(1-p)\bigl(1-(1-p)^{n-1}\bigr).
\end{equation*}
Dividing by $n-1$ gives $\phi_{n}(p)=\tfrac{n}{n-1}(1-p)(1-(1-p)^{n-1})$.

Continuity is immediate from the polynomial form. The boundary values are $\phi_{n}(0)=0$ and $\phi_{n}(1)=0$ by the factor $(1-p)$. Differentiating yields
$\phi_{n}'(p)=\tfrac{n}{n-1}\bigl[n(1-p)^{n-1}-1\bigr]$,
which vanishes exactly at $p^{\star}=1-n^{-1/(n-1)}\in(0,1)$. Since $\phi_{n}'(0)>0$ and $\phi_{n}'(1)<0$, this critical point is the unique interior maximum. Finally, $\phi_{n}(p)\in[0,1]$ for every $p\in[0,1]$ because $R^{\mathrm{DZ}}_{t}(q,a;k)\in[0,1]$ for every realization of $k$, so its expectation also lies in $[0,1]$.
\end{proof}

\subsection{Evidence verifier as marginal answer-accuracy gain}\label{app:theory-marginal}

\begin{proposition}[Evidence verifier as marginal answer-accuracy gain]\label{prop:marginal}
    For any valid triple $(q,a,e)$ and any training round $t$, the evidence verifier in Eq.~\eqref{eq:vqe} satisfies
    \begin{equation}
        V_{t}(q,e,a)
        =
        \mathbb{E}_{\hat a\sim\widetilde{\pi}_{\mathrm{sol},t}(\cdot\mid q,e)}
        \left[
            \mathbf{1}\{\hat a=a\}
        \right]
        -
        \mathbb{E}_{\hat a\sim\widetilde{\pi}_{\mathrm{aux},t}(\cdot\mid q)}
        \left[
            \mathbf{1}\{\hat a=a\}
        \right],
        \label{eq:prop-verifier}
    \end{equation}
    and therefore $V_t(q,e,a)\in[-1,1]$. When the auxiliary scorer shares its weights with the solver, the two distributions differ only in whether the evidence span $e$ is provided; in that case $V_t(q,e,a)$ is exactly the increase in the solver's answer probability induced by conditioning on $e$, and $V_t(q,e,a)=0$ whenever $\widetilde\pi_{\mathrm{sol},t}(\cdot\mid q,e)=\widetilde\pi_{\mathrm{sol},t}(\cdot\mid q)$ for every $e$.
    Moreover, the Monte Carlo estimator $\widehat V_{t,m}$ in Eq.~\eqref{eq:vhat-mc} is unbiased,
    \begin{equation}
        \mathbb{E}
        \left[
            \widehat V_{t,m}(q,e,a)
            \mid q,e,a
        \right]
        =
        V_t(q,e,a),
        \label{eq:vhat-unbiased}
    \end{equation}
    and its conditional variance satisfies
    \begin{equation}
        \mathrm{Var}
        \left[
            \widehat V_{t,m}(q,e,a)
            \mid q,e,a
        \right]
        \le
        \frac{1}{2m}.
        \label{eq:vhat-variance}
    \end{equation}
\end{proposition}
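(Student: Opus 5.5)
The plan is to read everything off the definitions in Eq.~\eqref{eq:pplus-pminus} and Eq.~\eqref{eq:vqe}, using only linearity of expectation and independence of the Monte Carlo draws.

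\emph{Representation and range.} For the identity \eqref{eq:prop-verifier}, I will rewrite each probability as the expectation of an indicator: $\mathbb{P}[\hat a=a]=\mathbb{E}[\mathbf{1}\{\hat a=a\}]$. Substituting into the definition of $V_t$ gives the identity. The range claim $V_t\in[-1,1]$ follows because each term lies in $[0,1]$.

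\emph{Shared weights.} For the shared-weight statement, I will use that $\widetilde\pi_{\mathrm{aux},t}(\cdot\mid q)=\widetilde\pi_{\mathrm{sol},t}(\cdot\mid q)$ by construction. Then $V_t$ is literally the difference between the solver's answer probability with and without $e$ in the context. If conditioning on $e$ leaves the distribution unchanged, the two expectations coincide and $V_t=0$.

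\emph{Unbiasedness.} For \eqref{eq:vhat-unbiased}, I will condition on $(q,e,a)$. Each $\mathbf{1}\{\hat a_+^{(j)}=a\}$ is Bernoulli$(p_+^t)$ and each $\mathbf{1}\{\hat a_-^{(j)}=a\}$ is Bernoulli$(p_-^t)$. Linearity then gives
\[
\mathbb{E}[\widehat V_{t,m}\mid q,e,a]=p_+^t-p_-^t=V_t.
\]

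\emph{Variance bound.} This is the only step needing care, namely the independence assumption. I will assume the $2m$ draws are mutually independent given $(q,e,a)$: the with-evidence and no-evidence samples are generated by separate decodes, and within each arm the draws are i.i.d. Under that assumption the two empirical means are independent, so their variances add:
\[
\mathrm{Var}[\widehat V_{t,m}\mid q,e,a]=\frac{p_+^t(1-p_+^t)}{m}+\frac{p_-^t(1-p_-^t)}{m}.
\]
Bounding each $p(1-p)$ by $1/4$ gives $\frac{1}{4m}+\frac{1}{4m}=\frac{1}{2m}$, which is \eqref{eq:vhat-variance}.

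The main thing to make explicit is that this independence across the two arms is part of the sampling protocol. Even if it were dropped, the bound would degrade only to $\tfrac{1}{m}$ via $\mathrm{Var}(X-Y)\le 2\mathrm{Var}X+2\mathrm{Var}Y$. So I would state the independence explicitly as the hypothesis under which the constant $1/(2m)$ holds.
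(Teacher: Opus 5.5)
Your proposal is correct and follows essentially the same route as the paper. In both, the identity and the range come straight from the definitions, unbiasedness follows from linearity over Bernoulli indicators, and the variance bound adds two terms $p(1-p)/m\le 1/(4m)$ using independence of the with-evidence and no-evidence samples, which the paper likewise attributes to the rollout schedule. Your explicit within-arm i.i.d.\ hypothesis and the fallback bound of $1/m$ without cross-arm independence are harmless refinements.
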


\paragraph{Discussion.}
Proposition~\ref{prop:marginal} clarifies the interpretation of the verifier introduced in Section~\ref{subsec:verifier}. A positive $V_t(q,e,a)$ means that the proposed evidence span makes the target answer more likely for the current solver, a value near zero means that the solver could already infer the answer from the question alone, and a negative value indicates a misleading or irrelevant span. The verifier thus turns evidence grounding into a reward-level quantity that can be optimized without oracle labels. We do not claim that this reward induces a non-vanishing policy gradient for every proposer parameterization; such a claim would require additional assumptions on the policy class and the optimization dynamics.

\begin{proof}[Proof of Proposition~\ref{prop:marginal}]
The identity~\eqref{eq:prop-verifier} is immediate from the definitions of $V_{t}$ and the two pointwise accuracies in Eqs.~\eqref{eq:pplus-pminus}--\eqref{eq:vqe}:
\begin{equation*}
    V_{t}(q,e,a)
    =
    \widetilde\pi_{\mathrm{sol},t}(a\mid q,e)
    -\widetilde\pi_{\mathrm{aux},t}(a\mid q),
\end{equation*}
which is the difference between two probabilities and hence lies in $[-1,1]$. In the weight-swap configuration $\widetilde\pi_{\mathrm{aux},t}\equiv\widetilde\pi_{\mathrm{sol},t}$, so $V_{t}(q,e,a)=\widetilde\pi_{\mathrm{sol},t}(a\mid q,e)-\widetilde\pi_{\mathrm{sol},t}(a\mid q)$. If $\widetilde\pi_{\mathrm{sol},t}(\cdot\mid q,e)=\widetilde\pi_{\mathrm{sol},t}(\cdot\mid q)$ for every $e$, then $V_{t}(q,e,a)=0$ identically.

For the Monte Carlo estimator, each indicator $\B{1}\{\hat a_{+}^{(j)}=a\}$ is a Bernoulli random variable with mean $\widetilde\pi_{\mathrm{sol},t}(a\mid q,e)$, and the analogous statement holds for $\B{1}\{\hat a_{-}^{(j)}=a\}$. By linearity of expectation, $\mab{E}[\widehat V_{t,m}\mid q,e,a]=V_{t}(q,e,a)$, establishing~\eqref{eq:vhat-unbiased}. The variance of a Bernoulli random variable with parameter $p$ is $p(1-p)\le 1/4$, so each of the two empirical means in Eq.~\eqref{eq:vhat-mc} has conditional variance at most $1/(4m)$. Because the two empirical means are computed from independent samples by construction of the rollout schedule,
\begin{equation*}
    \mathrm{Var}[\widehat V_{t,m}\mid q,e,a]
    \le
    \frac{1}{4m}+\frac{1}{4m}
    =
    \frac{1}{2m},
\end{equation*}
which proves~\eqref{eq:vhat-variance}.
\end{proof}

\section{Additional implementation details}\label{app:impl-details}

This appendix expands on the experimental setup of Section~\ref{subsec:setup}. The goal is to provide enough detail to reproduce all numbers in Tables~\ref{tab:bottleneck}--\ref{tab:main-joint}; nothing in this appendix changes any reward equation in the main text.

\paragraph{Backbone and tokenization.}
All four compared systems use the Qwen2.5-3B-Instruct backbone~\citep{qwen25}, the same tokenizer, and the same chat template. The auxiliary scorer in Eq.~\eqref{eq:vqe} reuses the current solver weights; we refer to this implementation as the \emph{weight-swap} configuration. The framework also supports a separately hosted frozen auxiliary scorer, but we do not use that configuration in any reported experiment.

\paragraph{Retrieval pipeline.}
The retrieval corpus is the FlashRAG Wikipedia-2018 snapshot ($\approx 21$M passages). Passages are encoded with E5-base-v2 using mean pooling and unit-norm $L^{2}$ normalization, and indexed with FAISS-IVF over $4{,}096$ centroids with $\mathrm{nprobe}=64$. The search tool returns the top-$3$ passages per query. Three CPU retrieval servers run in parallel to amortize tool latency.

\paragraph{Rollout protocol.}
A multi-turn proposer or solver rollout permits at most five assistant turns and at most $512$ tokens per tool response. The single-turn protocol used by the evidence verifier of Section~\ref{subsec:verifier} disables the search tool and limits the assistant to one turn; this is the configuration in which $p_{+}^{t}$ and $p_{-}^{t}$ of Eq.~\eqref{eq:pplus-pminus} are computed.

\paragraph{Data preparation.}
Proposer training prompts are drawn from the FlashRAG NQ--HotpotQA mixture, rebound to hop counts $h\in\{1,2,3,4\}$ in the ratio $4{:}3{:}2{:}1$. To construct the Phase B training set we roll out the frozen Phase A proposer on the same corpus with five samples per prompt and retain only valid triples; the proposer-emitted span $e$ becomes the gold evidence for the corresponding question--answer pair.

\paragraph{Optimization.}
Phase A runs $50$ HRPO steps with global batch size $256$, micro-batch $2$ per GPU, learning rate $1\mathrm{e}{-6}$, warmup ratio $0.03$, gradient clipping at $0.1$, the KL regularizer disabled, no nested grouping, and a verifier Monte Carlo budget $m=5$. Phase B runs $50$ GRPO steps with global batch size $256$, micro-batch $8$ per GPU, group size $5$, and the same optimizer settings as Phase A; the evidence-F1 coefficient is $\lambda_{E}=0.3$. Both phases use a single $8\times$B200 node.

\paragraph{Inference for the evaluation tables.}
For Tables~\ref{tab:bottleneck}--\ref{tab:main-joint}, each system is decoded greedily on each evaluation instance with the search tool enabled. The external judge for the evidence score is GPT-4.1, queried with the question, the gold answer, and the model-emitted span; the judge returns a binary decision over whether the question paired with the span is sufficient to recover the gold answer. The joint metric counts an instance as correct only when both the answer is exact-match equal to the gold answer (after normalization) and the judge labels the evidence as supporting.

\section{Format reward decomposition}\label{app:format}

The four sub-rewards of Eq.~\eqref{eq:fmt} are defined as follows.
Given a parsed proposer output with $h$ prescribed hops,
$T_{\mathrm{ass}}$ assistant turns, $T_{\mathrm{think}}$ assistant
turns that begin with a planning step, and $T_{\mathrm{tc}}$
syntactically valid tool-call emissions whose number matches the
number of returned tool responses,
\begin{align*}
    F_{\mathrm{think}}
    & = 
    \frac{T_{\mathrm{think}}}{\max(1,T_{\mathrm{ass}})},
    \\
    F_{\mathrm{tool}}(h)
    & = 
    \begin{cases}
        1 & h=1, \\
        \min\!\bigl(\tfrac{1+T_{\mathrm{tc}}}{h},\,1\bigr) & h>1,\,T_{\mathrm{tc}}=\text{\#returned responses},\\
        0 & h>1,\,\text{otherwise},
    \end{cases}
\end{align*}
and $F_{\mathrm{ans}}\in\{0,\tfrac{1}{2},1\}$ rewards a short,
in-context answer: writing $\tilde a\coloneqq\textsc{Norm}(a)$ and
letting $\Sigma$ denote the concatenation of the source document with
the returned tool responses,
\begin{equation*}
    F_{\mathrm{ans}}
     = 
    \begin{cases}
        1   & \tilde a\in\{\text{yes},\text{no}\}\text{ or }\bigl(\tilde a\subseteq\textsc{Norm}(\Sigma)\text{ and }|\tilde a|_{\text{words}}\le 5\bigr), \\
        \tfrac{1}{2} & \tilde a\subseteq\textsc{Norm}(\Sigma)\text{ and }5<|\tilde a|_{\text{words}}\le 10, \\
        0   & \text{otherwise.}
    \end{cases}
\end{equation*}
Rollouts with empty $q$ or empty $a$ receive $F_{\mathrm{fmt}}=0$ and
are filtered out before the verifier.

\section{Pseudocode}\label{app:pseudocode}

Algorithm~\ref{alg:phasea} describes one Phase A HRPO iteration with
the evidence verifier; Algorithm~\ref{alg:phaseb} describes one Phase
B GRPO iteration with the solver reward of Eq.~\eqref{eq:sol-reward}.
In Algorithm~\ref{alg:phasea}, the auxiliary scorer shares its weights
with the current solver and produces both the with-evidence and the
no-evidence single-turn rollouts.

\begin{algorithm}[h]
    \caption{One Phase A iteration of EVE-Agent (proposer)}\label{alg:phasea}
    \begin{algorithmic}[1]
        \State \textbf{Input:} proposer $\pi^{\mathrm{pro}}_{t}$, solver $\pi^{\mathrm{sol}}$ (frozen), corpus $\mac{D}$, hop pmf, coefficients $(\lambda_{V},\lambda_{B})$, MC budget $m$, group size $n$
        \State Sample $\{d_{i}\}_{i=1}^{N}\overset{\text{iid}}{\sim}\mathrm{Unif}(\mac{D})$ and $h_{i}\sim\text{hop pmf}$
        \For{$i=1,\dots,N$}
            \State $(q_{i},a_{i},e_{i})\sim\pi^{\mathrm{pro}}_{t}(\cdot\mid d_{i},h_{i},\mac{R})$
            \State $F_{\mathrm{fmt},i}\gets\textsc{ComputeFormatScore}(q_{i},a_{i},e_{i},d_{i},h_{i})$
            \State Mark $i$ \emph{invalid} if parse fails or $F_{\mathrm{fmt},i}=0$
        \EndFor
        \For{each \emph{valid} $i$}
            \State Sample $\{\hat a_{j}^{i}\}_{j=1}^{n}\sim\pi^{\mathrm{sol}}(\cdot\mid q_{i},\mac{R})$ \Comment{multi-turn, with search}
            \State $k_{i}\gets\sum_{j=1}^{n}\B{1}\{\hat a_{j}^{i}=a_{i}\}$
            \State $R^{\mathrm{DZ}}_{i}\gets\B{1}\{0<k_{i}<n\}(n-k_{i})/(n-1)$
            \State Sample $\{\hat a_{+}^{(j),i}\}_{j=1}^{m}\sim\widetilde\pi^{\mathrm{sol}}(\cdot\mid q_{i},e_{i})$ \label{alg:line:vplus} \Comment{single-turn, no search}
            \State Sample $\{\hat a_{-}^{(j),i}\}_{j=1}^{m}\sim\widetilde\pi^{\mathrm{aux}}(\cdot\mid q_{i})$ \Comment{shares weights with the solver}
            \State $\widehat V_{i}\gets\tfrac{1}{m}\sum_{j}\B{1}\{\hat a_{+}^{(j),i}=a_{i}\}-\tfrac{1}{m}\sum_{j}\B{1}\{\hat a_{-}^{(j),i}=a_{i}\}$
            \State $B_{i}\gets\max(0,\,1-|e_{i}|/L_{\max})$
            \State $R^{\mathrm{pro}}_{i}\gets\tfrac{1}{2}F_{\mathrm{fmt},i}+R^{\mathrm{DZ}}_{i}+\lambda_{V}\widehat V_{i}+\lambda_{B}B_{i}$
        \EndFor
        \For{invalid $i$}
            \State $R^{\mathrm{pro}}_{i}\gets\tfrac{1}{2}F_{\mathrm{fmt},i}$
        \EndFor
        \State Compute hop-grouped advantages $A_{i,h}$ by Eq.~\eqref{eq:hrpo-adv}
        \State Update $\pi^{\mathrm{pro}}$ with one HRPO step on $\{(q_{i},a_{i},e_{i}),A_{i,h}\}$
    \end{algorithmic}
\end{algorithm}

\begin{algorithm}[h]
    \caption{One Phase B iteration of EVE-Agent (solver)}\label{alg:phaseb}
    \begin{algorithmic}[1]
        \State \textbf{Input:} Phase A proposer checkpoint (frozen), solver $\pi^{\mathrm{sol}}_{t}$, training shard $\{(q_{i},a_{i},e_{i})\}_{i=1}^{N}$ generated by the Phase A proposer, GRPO group size $n$, coefficient $\lambda_{E}$
        \For{$i=1,\dots,N$}
            \State Sample $\{(\hat a_{j}^{i},\hat e_{j}^{i})\}_{j=1}^{n}\sim\pi^{\mathrm{sol}}_{t}(\cdot\mid q_{i},\mac{R})$
            \For{$j=1,\dots,n$}
                \State $R_{\mathrm{correct},j}^{i}\gets\B{1}\{\textsc{EM}(\hat a_{j}^{i},a_{i})\}$
                \State $R_{\mathrm{evidence},j}^{i}\gets\mathrm{F1}_{\mathrm{tok}}(\textsc{Norm}(\hat e_{j}^{i}),\textsc{Norm}(e_{i}))$
                \State $R^{\mathrm{sol},i}_{j}\gets R_{\mathrm{correct},j}^{i}+\lambda_{E}R_{\mathrm{evidence},j}^{i}$
            \EndFor
        \EndFor
        \State Update $\pi^{\mathrm{sol}}$ with one GRPO step of Eq.~\eqref{eq:grpo} on $\{R^{\mathrm{sol},i}_{j}\}$
    \end{algorithmic}
\end{algorithm}

\section{Selector: full specification}\label{app:selector-full}

This appendix gives the complete specification of the cluster-bandit
selector introduced in Section~\ref{subsec:selector}.

\paragraph{Document embeddings.}
A frozen sentence encoder $\phi$ (E5-base-v2 with mean pooling and
$L^{2}$ normalization) maps each $d\in\mac{D}$ to
$z_{d}=\phi(d)\in\mab{R}^{D}$, $D=768$. Embeddings are precomputed
once per corpus.

\paragraph{Adaptive clustering.}
At round $t$, the cluster set $\mac{C}_{t}=\{C_{t,k}\}_{k=1}^{K_{t}}$
is obtained by mini-batch $k$-means on $\{z_{d}\}_{d\in\mac{D}}$ with
a granularity schedule
\begin{equation}
    K_{t}
     = 
    K_{0}+\lfloor\alpha\,t\rfloor,
    \qquad K_{0}=10, \alpha\ge 0.
    \label{eq:selector-K}
\end{equation}
With $\alpha=0$ the clustering is fixed throughout training; with
$\alpha>0$, whenever $K_{t}>K_{t-1}$ we re-cluster the corpus and
\emph{inherit} the bandit statistics across the split. Concretely,
each new centroid is assigned to its nearest old centroid by
Euclidean distance, and if old cluster $j$ has $n_{\mathrm{children}}$
new children with counts $N_{j}$ and reward sum $S_{j}$, each child
starts with $\max(\lfloor N_{j}/n_{\mathrm{children}}\rfloor,1)$ pulls
and $S_{j}/n_{\mathrm{children}}$ reward. Parent statistics are split
across children rather than discarded; in particular, we do not reset
arm statistics on split.

\paragraph{UCB1 bandits.}
Two independent UCB1 bandits run side by side; both initialize every
arm with one virtual pull and zero virtual reward so the exploration
bonus is finite from round $1$. For arm $k$ with $N_{k}$ pulls,
reward sum $S_{k}$, and total pulls $N_{\mathrm{tot}}=\sum_{j}N_{j}$,
the UCB score is
\begin{equation}
    U_{k}
     = 
    \frac{S_{k}}{\max(N_{k},1)}
     + 
    \beta\sqrt{\frac{\log\max(N_{\mathrm{tot}},1)}{\max(N_{k},1)}},
    \qquad \beta>0,
    \label{eq:selector-ucb}
\end{equation}
with $\beta=1$ throughout. When a single arm is required, we use
the deterministic $\argmax_{k}U_{k}$; when a batch of $n>1$ arms is
required, the batch is drawn from a softmax over $U_{k}$ rescaled by
the empirical standard deviation of the $U$-values, which keeps the
batch diverse without sacrificing the exploration bias of UCB1.

\paragraph{Cluster and task arm sets.}
The cluster bandit has $K_{t}$ arms; arm $k$ corresponds to the
cluster $C_{t,k}$. The task-type bandit has five arms with labels
\textsc{factual}, \textsc{comparison}, \textsc{causal},
\textsc{temporal}, and \textsc{aggregation}. The hop count is already
controlled independently through the hop pmf and is therefore not a
task type.

\paragraph{Within-cluster sampling.}
Let $u_{d}^{t}$ be the number of times document $d$ has been previously
sampled at round $t$ and let $\delta_{d,k}=\Vert z_{d}-\mu_{k}\Vert_{2}$
be its Euclidean distance to the cluster centroid $\mu_{k}$. Inside
the chosen cluster $C_{t,k_{t}}$, a document is drawn with probability
\begin{equation}
    \mab{P}\bigl[d\mid k_{t}\bigr]
    \propto
    \delta_{d,k_{t}}\cdot\frac{1}{1+u_{d}^{t}},
    \qquad d\in C_{t,k_{t}}.
    \label{eq:selector-sample}
\end{equation}
The first factor biases sampling toward the cluster boundary, where
atypical entities concentrate; the second is a soft
sampling-without-replacement that discourages revisiting
heavily-used documents inside the same cluster. If every score
collapses to zero (e.g.\ for an empty cluster), the policy falls
back to a uniform draw.

\paragraph{Between-iteration reward feedback.}
The selector is updated between self-evolution iterations rather than
within a single policy-gradient step. After each iteration, we read
(i) the generated solver-training samples, annotated with the
selector's chosen cluster id and task-type id per row, and (ii) the
run-level summary score $R_{\mathrm{run}}\in[0,1]$ of the
just-finished iteration (the average of the latest proposer and
solver mean rewards). The per-sample reward is the product of
$R_{\mathrm{run}}$ and a bounded quality proxy $Q_{i}\in[0,1]$
defined for sample $i$ as
\begin{equation}
    Q_{i}
     = 
    q_{\mathrm{len}}(|q_{i}|)\cdot q_{\mathrm{ans}}(|a_{i}|)\cdot
    \frac{1}{\sqrt{\max(D_{i},1)}},
\end{equation}
where $q_{\mathrm{len}}(\ell)=1$ if $20\le\ell\le 220$ and $0.65$
otherwise, $q_{\mathrm{ans}}(\ell)=1$ if $1\le\ell\le 80$ and $0.55$
otherwise, and $D_{i}\ge 1$ is the number of duplicates of the
prompt string in the batch. Let
$r_{i}\coloneqq R_{\mathrm{run}}Q_{i}$ be the resulting per-sample
reward. Conditional on the cluster id $k(i)$, the selector reward fed
back to the bandit is
\begin{equation}
    R^{\mathrm{sel}}_{i}
     = 
    \underbrace{-\log\bigl(N_{k(i)}/N_{\mathrm{tot}}+\varepsilon\bigr)}_{=:R_{\mathrm{div}}(k(i))}
     + 
    \lambda_{u}\,
    \underbrace{\bigl(r_{i}-\overline r_{k(i)}^{\mathrm{prev}}\bigr)}_{=:R_{\mathrm{util}}(i)},
    \label{eq:selector-reward}
\end{equation}
where $N_{k},N_{\mathrm{tot}}$ are the cluster bandit's pre-update
pull counts and $\overline r_{k}^{\mathrm{prev}}$ is its mean reward
\emph{before} the current update. We use
$(\lambda_{u},\varepsilon)=(0.5,10^{-8})$. The same reward formula is
applied to the task-type bandit by replacing the cluster id $k(i)$
with the task-type id $\tau(i)$.

\section{Hyperparameters}\label{app:hp}

Table~\ref{tab:hp} lists the hyperparameter values used throughout.

\begin{table}[h]
    \centering
    \caption{EVE-Agent hyperparameters.}
    \label{tab:hp}
    \footnotesize
    \begin{tabular}{l l}
        \toprule
        Name                                & Value \\
        \midrule
        $\lambda_{V}$ (evidence-quality coefficient)    & $0.5$ \\
        $\lambda_{B}$ (brevity coefficient)             & $0.1$ \\
        $L_{\max}$ (brevity cutoff)                     & $256$ tokens \\
        $m$ (verifier Monte Carlo budget)               & $5$ \\
        $\lambda_{E}$ (solver evidence-F1 coefficient)  & $0.3$ \\
        Phase A HRPO steps                              & $50$ \\
        Phase B GRPO steps                              & $50$ \\
        Global batch size                               & $256$ \\
        Hop pmf $\{1,2,3,4\}$                           & $4{:}3{:}2{:}1$ \\
        \midrule
        $K_{0}$ (initial cluster count)                 & $10$ \\
        $\alpha$ (granularity slope)                    & $0.0$ default \\
        $\beta$ (UCB exploration weight)                & $1.0$ \\
        $\lambda_{u}$ (selector utility weight)         & $0.5$ \\
        $\varepsilon$ (diversity smoothing)             & $10^{-8}$ \\
        \bottomrule
    \end{tabular}
\end{table}

\section{Extended evidence-grounding diagnostics}\label{app:bottleneck-full}

Table~\ref{tab:bottleneck-full} extends Table~\ref{tab:bottleneck} of Section~\ref{subsec:exp-bottleneck} by adding the per-benchmark answer-accuracy and evidence-presence columns. The evaluation uses the same $1{,}000$ instances per benchmark, with $125$ instances on Bamboogle.

\begin{table}[h]
    \centering
    \caption{Full evidence-quality breakdown on $1{,}000$ samples per
        benchmark ($125$ for Bamboogle). \emph{Initial} is the
        Qwen2.5-3B-Instruct backbone; \emph{Prior} is a faithful
        re-implementation of~\citet{yue2026drzero}; \emph{EVE-Agent}
        is the verifier-trained solver.}
    \label{tab:bottleneck-full}
    \setlength{\tabcolsep}{3.5pt}
    \footnotesize
    \begin{tabular}{l c c c c}
        \toprule
        Benchmark & Initial (no s.) & Initial (s.) & Prior & EVE-Agent \\
        \midrule
        \multicolumn{5}{l}{\emph{Answer accuracy}} \\
        NQ        & $0.068$ & $0.032$ & $0.069$ & $\mathbf{0.289}$ \\
        TriviaQA  & $0.219$ & $0.125$ & $0.257$ & $\mathbf{0.437}$ \\
        PopQA     & $0.081$ & $0.055$ & $0.134$ & $\mathbf{0.300}$ \\
        HotpotQA  & $0.057$ & $0.029$ & $0.110$ & $\mathbf{0.209}$ \\
        2Wiki     & $0.034$ & $0.013$ & $0.077$ & $\mathbf{0.176}$ \\
        MuSiQue   & $0.015$ & $0.010$ & $\mathbf{0.055}$ & $0.050$ \\
        Bamboogle & $\mathbf{0.200}$ & $0.024$ & $0.104$ & $0.088$ \\
        \midrule
        \multicolumn{5}{l}{\emph{Evidence-present rate} (fraction of rollouts emitting an evidence span)} \\
        NQ        & $0.996$ & $0.785$ & $0.991$ & $0.999$ \\
        TriviaQA  & $0.996$ & $0.723$ & $0.977$ & $0.991$ \\
        HotpotQA  & $0.996$ & $0.567$ & $0.971$ & $0.998$ \\
        2Wiki     & $0.996$ & $0.555$ & $0.933$ & $0.997$ \\
        MuSiQue   & $0.993$ & $0.585$ & $0.911$ & $0.996$ \\
        \bottomrule
    \end{tabular}
\end{table}

Two patterns confirm the bottleneck argument of
Section~\ref{subsec:exp-bottleneck}. First, the prior system emits an
evidence span in $90$--$99\%$ of cases, comparable to the initial
backbone, yet its judged evidence score in Table~\ref{tab:bottleneck}
is low. Second, the answer-and-evidence gap is much larger than the
answer-only gap on the open-domain datasets: a verifier-trained solver
is several times more likely to be simultaneously correct \emph{and}
supported.

\end{document}